\documentclass[11pt,letterpaper]{article}
\usepackage[margin=1in]{geometry}
\usepackage{lmodern}
\usepackage[numbers,square]{natbib}
\setcitestyle{citesep={,}}
\usepackage{graphicx}
\usepackage{subcaption}
\usepackage{enumitem}
\usepackage{float}
\usepackage[utf8]{inputenc} %
\usepackage[T1]{fontenc}    %
\usepackage[hidelinks]{hyperref}       %
\usepackage{url}            %
\usepackage{booktabs,tabularx,array}      %
\usepackage{amsfonts}       %
\usepackage{nicefrac}       %
\usepackage{microtype}      %
\usepackage{xcolor}         %
\usepackage{wrapfig}

\usepackage{amsmath}
\usepackage{amssymb}
\usepackage{mathtools}
\usepackage{amsthm}
\usepackage{bbm}

\usepackage{algorithm}
\usepackage{algpseudocode}

\usepackage[capitalize,noabbrev]{cleveref}
\usepackage{caption}
\theoremstyle{plain}
\newtheorem{theorem}{Theorem}[section]

\theoremstyle{definition}

\newtheorem{assumption}[theorem]{Assumption}
\theoremstyle{remark}

\newcommand{\mbb}{\mathbb}
\newcommand{\mbf}{\mathbf}
\newcommand{\mcl}{\mathcal}
\newcommand{\bs}{\boldsymbol}

\newcommand{\T}{\textnormal}
\newcommand{\x}{\mathbf{x}}
\newcommand{\X}{{\mathcal{X}}}

\newcommand{\qpots}{$q\texttt{POTS}$}

\newcommand{\qpotsdoe}{$q\texttt{POTS-DOE}$}

\newcommand{\answerTODO}[1][]{\textcolor{red}{\bf [TODO]}}

\title{Learning What to Evaluate: Correlation-Aware Decoupling for Multiobjective Bayesian Optimization}

\author{%
  Ashwin Renganathan\textsuperscript{1,2}\qquad Peter Bachman\textsuperscript{1}\\[0.5em]
  {\small \textsuperscript{1}Aerospace Engineering}\\
  {\small \textsuperscript{2}Institute of Computational and Data Sciences (ICDS)}\\
  {\small Pennsylvania State University}\\
  {\small University Park, PA 16802}%
}
\date{}

\hypersetup{pdftitle={Learning What to Evaluate: Correlation-Aware Decoupling for Multiobjective Bayesian Optimization}, pdfauthor={Ashwin Renganathan and Peter Bachman}}

\begin{document}

\maketitle

\begin{abstract}
  Multiobjective Bayesian optimization (MOBO) with Gaussian process (GP) surrogates is a sample efficient approach to solving multiobjective optimization problems. In MOBO, a Bayesian decision theoretic acquisition function guides the adaptive selection of new candidate inputs, on which objectives and constraints are evaluated to update the surrogate model sequentially. Existing approaches maintain independent GP models for the objectives and constraints, with new observations evaluating all objectives and constraints in a coupled fashion. However, the objectives and constraints often contain inherent correlations which, if exploited, can enable \emph{decoupled} evaluations where only a subset of them are evaluated at each round. We present a new approach that leverages a multitask GP model to jointly learn all objectives and constraints, and propose a \emph{total correlation} metric that enables identifying an \emph{optimal} subset of objectives and constraints to be evaluated at every round, even under uniform evaluation costs. Theoretically, we show that our acquisition policy is asymptotically consistent despite decoupling and that our proposed decoupled subset selection rule maximizes the expected posterior entropy reduction about unevaluated tasks under mild conditions. Empirically, we show that our approach outperforms coupled and decoupled baselines in the state of the art.
\end{abstract}

\section{Introduction}
\label{sec:intro}

Many scientific and engineering design problems require optimizing several expensive objectives while satisfying expensive constraints. Standard MOBO methods, with GP~\cite{rasmussen2006gaussian} surrogates, usually assume that evaluating a design returns all objectives and constraints together. In many applications, however, separate simulators, tests, operating conditions, or analyses produce these quantities; their evaluations can therefore be ``decoupled''. This creates a scalar-oracle budget: the algorithm may choose both a design and an objective or constraint oracle to query. For instance, aircraft design uses computational aerodynamic simulators to evaluate performance at multiple operating conditions, each evaluated independently~\cite{carlson2025multiobjective}. Neural architecture search (NAS) requires simultaneously optimizing predictive accuracy and model latency on edge devices \cite{janapa2022mlperf}, which are separately tested.

We consider optimizing $K$ objectives subject to $C$ constraints, all independently evaluable:
\begin{equation} 
    \begin{split}
    \max_{\x \in \X}~&\{f_1(\x), \ldots, f_K(\x) \} \\    
    \T{s.t.} ~& c_i(\x) = 0,~i\in \mcl{E} \\
     ~& c_i(\x) \geq 0,~ i \in \mcl{I},
    \end{split}
    \label{eqn:main_problem}
    \end{equation}
where $\x \in \X \subset \mbb{R}^d$ is the design variable, $f_k: \X \rightarrow \mbb{R},~\forall k=1,\ldots,K$ and $c_i : \X \rightarrow \mbb{R}$, the objectives and constraints, respectively, are expensive zeroth-order oracles. $\mcl{E}$ and $\mcl{I}$ index the equality and inequality constraints, respectively, and $C = |\mcl{E} \cup \mcl{I}|$. $\X$ is the domain of $f$ and $c$. We seek a \emph{Pareto} optimal set of solutions that ``Pareto dominates'' all other points. Let $\bs{f(\x)}=[f_1(\x),\ldots,f_K(\x)]^\top$; if $\bs{f(\x)}$ Pareto dominates $\bs{f(\x')}$, then $f_k(\x) \geq f_k(\x'),~\forall k=1,\ldots,K$ and $ \exists k \in [K]$ such that $f_k(\x) > f_k(\x')$. We write Pareto dominance as $\bs{f(\x)} \succ \bs{f(\x')}$. The set $\mcl{Y}^* = \{\bs{f(\x)} ~:~ \nexists \x' \in \X: \bs{f(\x') \succ \bs{f(\x)}} \}$ is the \emph{Pareto frontier}, and the set $\X^* = \{ \x \in \X ~:~ \bs{f(\x)} \in \mcl{Y}^*\}$ is the \emph{Pareto set}. The hypervolume (HV) of a Pareto frontier is the $K-$dimensional Lebesgue measure $\lambda$ of the region dominated by $\mcl{Y}^*$ and bounded below by a reference point $\mbf{r} \in \mbb{R}^K$. This monotonically increasing metric is typically used to assess convergence in MOBO. 

\begin{figure}[ht!]
    \centering
    \includegraphics[width=1\linewidth]{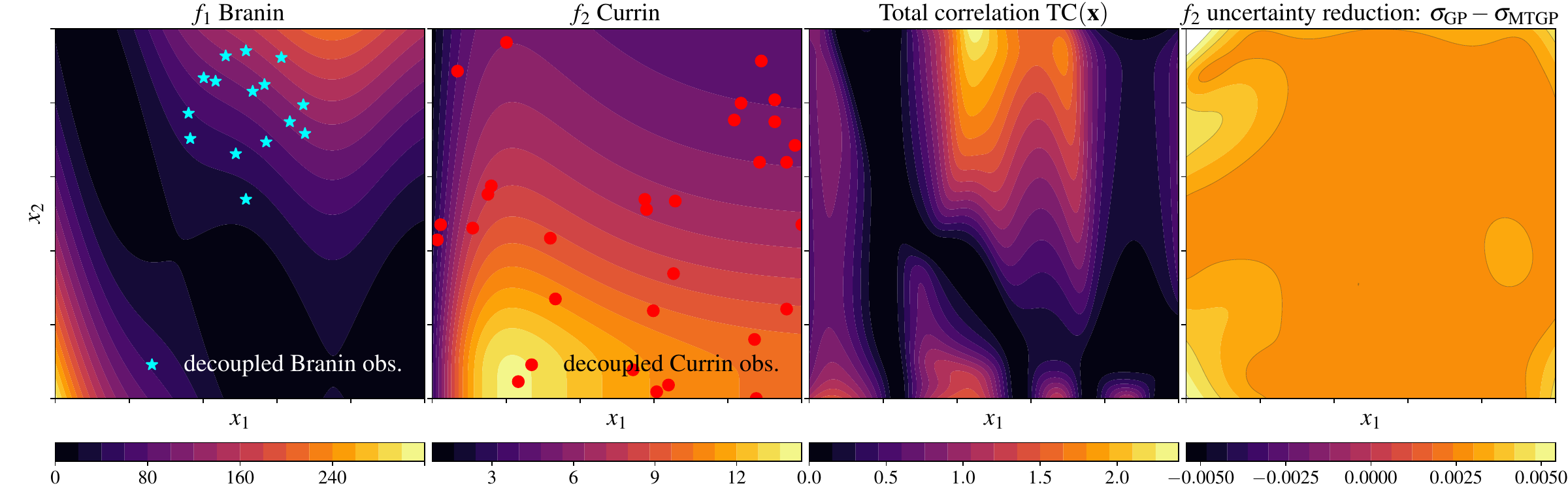}
    \caption{The Branin and Currin test functions (left and middle left panels) and their total correlation $\mathrm{TC}(\x)$ (middle right panel). Knowing the $\mathrm{TC}(\x)$ structure allows us to decouple the two functions' evaluations -- $f_1$ is evaluated at $15$ points (\textcolor{cyan}{$\star$}, left) with \emph{high} $\mathrm{TC}(\x)$ and $f_2$ is evaluated on $30$ points (\textcolor{red}{$\bullet$}, middle left) with \emph{low} $\mathrm{TC}(\x)$.
    Then, we fit two independent GPs using all $45$ points for both $f_1$ and $f_2$ ($90$ evaluations total) and one multitask GP (MTGP) with only the $45$ \emph{decoupled} evaluations.
    The right panel shows the MTGP's reduction of posterior uncertainty in $f_2$. Thus, appropriately chosen observation sites allow MTGP to transfer information between tasks, which can improve sample efficiency in MOBO.}
    \label{fig:bc_tc_illustration}
\end{figure}

In MOBO, a decision-theoretic rule (the acquisition function) typically selects the next evaluation point \citep{jones1998efficient,shahriari2016taking,frazier2018tutorial, knowles2006parego}.
When objectives or constraints are statistically correlated, a query to one oracle can reduce posterior uncertainty about others. Evaluating every oracle at every selected design may then be redundant. The key question is therefore not only where to evaluate, but also which subset of objectives and constraints to evaluate at each design.
Multitask Gaussian process (MTGP) models~\cite{bonilla2008multi} offer an effective way to jointly learn correlated tasks.
\Cref{fig:bc_tc_illustration} illustrates the mechanism we exploit: regions with high ``total correlation'' (defined in \Cref{sec:method}) indicate where observing one task can inform another. In the Branin-Currin example, a multitask posterior trained on selectively decoupled observations has lower uncertainty for one objective than independent, ``single task'', models unable to transfer information across tasks.

Traditionally, independent GPs emulate objectives and constraints in MOBO. For instance, the expected hypervolume improvement (EHVI)~\citep{emmerich2008exi}, its stochastic extension~\citep{daulton2020differentiable}, and predictive entropy search (PESMO)~\citep{garrido2016pesmoc} all assume independent GP priors across objectives and constraints. This independence assumption permits simple product factorizations yielding closed-form acquisition functions. 
However, MTGPs complicate acquisition function construction because cross-output correlations prevent product factorizations possible with independent GPs. Standard acquisitions thus become high-dimensional integrals over correlated Gaussians (truncated Gaussians under constraints), preventing their widespread use in MOBO. Despite this limitation, MTGPs have been used in single-objective BO to show that sharing structure across tasks can substantially reduce the evaluations needed for good solutions \citep{swersky2013multitaskbo}. In this work, we combine MTGPs with a total-correlation-based metric to propose an optimal way to decouple objective and constraint evaluations in MOBO. A constrained MOBO method is needed to exploit inter-task correlation to decouple objective and constraint evaluations \emph{even under uniform oracle evaluation costs}, while avoiding intractable MTGP acquisition computations. 
    
\paragraph{Related work.} Recently, \citet{renganathan2025qpots} extended Thompson sampling~\cite{thompson1933likelihood} to constrained MOBO as Pareto optimal Thompson sampling (qPOTS). qPOTS is attractive here because it replaces acquisition integration over correlated predictive distributions with optimization over draws of posterior sample paths. Drawing joint sample paths from an MTGP is straightforward, whereas adapting EHVI or PESMO style acquisitions to correlated objective/constraint posteriors leads to difficult high-dimensional integrals. Therefore, we build on qPOTS~\cite{renganathan2025qpots} in this work.

MOBO with decoupled evaluations has been addressed before, albeit rarely. These include the entropy-based approaches Predictive Entropy Search
(PESMO)~\cite{garrido2016pesmoc}, Pareto Frontier Entropy Search (PFES)~\cite{suzuki2020multi}, and more recently the Joint Entropy Search (JES)~\cite{tu2022joint}. Other approaches include the hypervolume knowledge gradient (HVKG)~\cite{daulton_hypervolume_nodate} and another variant of knowledge gradient \cite{buckingham2025knowledge}.
Outside of these two categories, FlexiBO~\cite{iqbal2023flexibo} uses the uncertainty reduction in the hypervolume to accommodate decoupled evaluations. However, all of these methods share the following limiting characteristics. (1) They require the cost of evaluating each objective to be different for the decoupling to have a value proposition. Under uniform evaluation costs, they may not provide benefit. (2) They do not consider the decoupling of constraint evaluations. (3) They do not exploit the inherent correlation between objectives; instead, decoupling is achieved as a consequence of cost weighting. (4) They are not directly compatible with MTGPs, which would necessitate appropriate modifications to their acquisition function computation.
On the other hand, our proposed approach overcomes all of these limitations, while enabling decoupling \emph{under uniform evaluation costs} -- this crucial benefit broadens decoupling to scenarios where cost is not a factor or unknown. Our take is that, as long as objectives and constraints are correlated and decoupled evaluations are feasible, it should be exploited for better sample efficiency in MOBO regardless of any difference in evaluation costs.
 Our contributions are summarized as follows.
\begin{enumerate}[leftmargin=*]
    \item {\bf Method.} We introduce \qpotsdoe: ``batch Pareto optimal Thompson sampling with decoupled oracle evaluations'' for constrained MOBO with partial objective and constraint observations under uniform scalar-oracle costs.
    \item {\bf Decoupling rule.} 
    We propose a two-stage decoupling mechanism: a total correlation (\Cref{sec:total_correlation}) gate to decide whether decoupling is warranted at a candidate design, followed by a Gaussian mutual-information criterion to choose the optimal subset of objectives and constraints to evaluate. 
    \item {\bf Theory.} We show that despite decoupling our acquisition policy is asymptotically consistent, and that the chosen subset of decoupled evaluations is optimal.
    \item {\bf Empirics.} We demonstrate improved hypervolume performance and evaluation efficiency on synthetic and real-world constrained MOBO benchmarks, including ablations over the TC threshold and batch size.
\end{enumerate}
The rest of the paper is organized as follows. In \Cref{sec:preliminaries} we briefly review preliminaries. Our overall method is summarized in \Cref{sec:method}. Our theoretical and empirical results are presented in \Cref{sec:theory} and \Cref{sec:experiments}, respectively. We provide concluding remarks in \Cref{sec:conclusions}. Our software implementation is openly available at \url{https://github.com/csdlpsu/qpots}.

\section{Preliminaries}
\label{sec:preliminaries}
\paragraph{Multitask Gaussian process models.} A review of ``single task'' GP models is provided in \Cref{sec:gp_intro}.
For MTGP models, the output at each input $\mathbf{x}\in\mathbb{R}^d$ is $K$-dimensional. We define the vector-valued process
\begin{equation*}
  \mathbf{Y}(\mathbf{x})
  \;=\;
  \big[Y_1(\mathbf{x}),\,\ldots,\,Y_K(\mathbf{x})\big]^\top
  \in \mathbb{R}^K,
\end{equation*}
where $Y_k(\mathbf{x})$ denotes the latent function for task $k\in\{1,\ldots,K\}$. An MTGP assumes that any finite collection of stacked outputs is jointly Gaussian \cite{bonilla2008multi,alvarez2012kernels}. We write $\mathbf{Y}(\mathbf{x})
  \sim
  \mathrm{MTGP}\!\big(\boldsymbol{\mu}(\mathbf{x}),\, \mathbf{K}(\mathbf{x},\mathbf{x}')\big),$
where $\boldsymbol{\mu}(\mathbf{x})\in\mathbb{R}^K$ is the mean vector and $\mathbf{K}(\mathbf{x},\mathbf{x}')
  \;=\;
  \mathrm{cov}\!\big(\mathbf{Y}(\mathbf{x}),\mathbf{Y}(\mathbf{x}')\big)
  \in \mathbb{R}^{K\times K}$
is a \emph{matrix-valued} kernel that encodes both input similarity and inter-task correlations \cite{alvarez2012kernels}.
Given inputs $\mathbf{X}=[\mathbf{x}_1,\ldots,\mathbf{x}_n]^\top$, define the stacked output vector
\begin{equation*}
  \mathbf{y}
  \;=\;
  \big[\mathbf{Y}(\mathbf{x}_1)^\top,\ldots,\mathbf{Y}(\mathbf{x}_n)^\top\big]^\top
  \in \mathbb{R}^{nK}.
\end{equation*}
Then $\mathbf{y} \sim \mathcal{N}(\boldsymbol{\mu}, \Sigma),$
where $\mathbf{m}=[\boldsymbol{\mu}(\mathbf{x}_1)^\top,\ldots,\boldsymbol{\mu}(\mathbf{x}_n)^\top]^\top$ and $\Sigma\in\mathbb{R}^{nK\times nK}$ is a block covariance matrix with $K\times K$ blocks $ \Sigma_{ij} \;=\; \mathbf{K}(\mathbf{x}_i,\mathbf{x}_j),
  ~ i,j\in\{1,\ldots,n\}.$
A common observation model is additive Gaussian noise per input: $\tilde{\mathbf{y}}_i = \mathbf{Y}(\mathbf{x}_i) + \boldsymbol{\varepsilon}_i,~
  \boldsymbol{\varepsilon}_i \sim \mathcal{N}(\mathbf{0}, \Sigma_\varepsilon),$
with $\Sigma_\varepsilon\in\mathbb{R}^{K\times K}$ often chosen diagonal (independent task noise) or full (correlated noise). Stacking all observations gives
\begin{equation*}
  \tilde{\mathbf{y}} \sim \mathcal{N}\!\big(\mathbf{m},\, \Sigma + \mathbf{I}_n \otimes \Sigma_\varepsilon\big),
\end{equation*}
where $\otimes$ denotes the Kronecker product.
For a test input $\mathbf{x}_*$, define $  \Sigma_{X*} =
  \big[\mathbf{K}(\mathbf{x}_i,\mathbf{x}_*)\big]_{i=1}^n
  \in \mathbb{R}^{nK\times K},
  ~
  \Sigma_{*X} = \Sigma_{X*}^\top,~
  \Sigma_{**} = \mathbf{K}(\mathbf{x}_*,\mathbf{x}_*) \in \mathbb{R}^{K\times K}.$
Conditioning the joint Gaussian on observations yields \cite{bonilla2008multi,alvarez2012kernels}:
\begin{equation}
  \begin{aligned}
  \mathbf{Y}(\mathbf{x}_*) \mid \mathbf{X}, \tilde{\mathbf{y}}
  \;\sim\; \mathcal{N}\!\Big(
    &\mathbf{m}(\mathbf{x}_*) + \Sigma_{*X}\big(\Sigma + \mathbf{I}_n \otimes \Sigma_\varepsilon\big)^{-1}\big(\tilde{\mathbf{y}}-\mathbf{m}\big),\\
    &\Sigma_{**} - \Sigma_{*X}\big(\Sigma + \mathbf{I}_n \otimes \Sigma_\varepsilon\big)^{-1}\Sigma_{X*}
  \Big).
  \end{aligned}
  \label{eq:mtgp_posterior}
\end{equation}

A central modeling choice is the form of the matrix-valued kernel $\mathbf{K}(\mathbf{x},\mathbf{x}')$.
A widely used parameterization is the \emph{intrinsic coregionalization model} (ICM), $\mathbf{K}(\mathbf{x},\mathbf{x}') = \mathbf{B}\, k(\mathbf{x},\mathbf{x}'),$
where $k(\cdot,\cdot)$ is a scalar kernel over inputs and $\mathbf{B}\in\mathbb{R}^{K\times K}$ is a positive semidefinite \emph{coregionalization} matrix capturing task relatedness \cite{bonilla2008multi,alvarez2012kernels}. Under the ICM kernel, the stacked covariance takes the Kronecker form $\Sigma = \Sigma_\x \otimes \mathbf{B},~
  (\Sigma_\x)_{ij} = k(\mathbf{x}_i,\mathbf{x}_j),$
which can be exploited computationally when $n$ and $K$ are large.
More generally, the \emph{linear model of coregionalization} (LMC) uses a sum of separable components,
allowing multiple latent correlation structures across tasks and inputs \cite{alvarez2012kernels,teh2005semiparametric}. This flexibility often improves transfer across related tasks by sharing statistical strength while preserving task-specific variation.

\section{\texorpdfstring{$q\texttt{POTS-DOE}$}{qPOTS-DOE}: Batch Pareto optimal Thompson sampling with decoupled oracle evaluations}
\label{sec:method}
Instead of fitting $K$ independent posterior
GP models in \cite{renganathan2025qpots}, we fit a single $K$-task MTGP model for all objectives. We directly address the constrained setting; but review the unconstrained setting in \Cref{sec:unconstrained_qpots}.

In addition to $K$ objectives, let there be a total of $C$ constraints $\{c_1,\ldots,c_C\}$, where
$\mcl{I}\cup \mcl{E}=[C]$. We
fit a single $(K+C)$-task multitask GP model for all objectives
and constraints. Let
\[
\mathbf{Y}_{1:K+C}(\x)
\equiv
\begin{bmatrix}
Y_1(\x) & \cdots & Y_K(\x) &
Y_{K+1}(\x) & \cdots & Y_{K+C}(\x)
\end{bmatrix}^{\top},
\]
where, for each $i\in[C]$, $Y_{K+i}$ denotes the posterior task
corresponding to constraint $c_i$. The joint multitask posterior, with hyperparameters $\Omega$, is
\[
\mathbf{Y}_{1:K+C}(\cdot)\mid D_n^{1:K+C},\Omega
\sim
\mathrm{MTGP}\!\left(
\boldsymbol{\mu}_n(\cdot),
\boldsymbol{\Sigma}_n(\cdot,\cdot)
\right),
\]
with
\[
\left[\boldsymbol{\Sigma}_n(\x,\x')\right]_{ij}
=
\operatorname{cov}
\left(
Y_i(\x),Y_j(\x') \mid D_n^{1:K+C},\Omega
\right),
\qquad i,j\in[K+C].
\]
Hence, the objectives and constraints are learned jointly, and the
posterior cross-covariances between objective tasks and constraint tasks
are retained.
For a posterior sample path, define the sampled feasible region as
\[
\mathcal{X}^{\mathrm{feas}}_{\mathbf{Y}}
=
\left\{
\x\in\mathcal{X}:
\mathbbm{1}\!\left\{
\bigcap_{i\in \mcl E} Y_{K+i}(\x)=0
\right\}
\mathbbm{1}\!\left\{
\bigcap_{i\in \mcl I} Y_{K+i}(\x)\geq 0
\right\}
=1
\right\}.
\]
Then, we choose points according to
\[
p_{X^\ast}(\x)
=
\int
\delta
\left(
\x-
\operatorname*{arg\,max}_{\x\in\mathcal{X}^{\mathrm{feas}}_{\mathbf{Y}}}
\{Y_1(\x),\ldots,Y_K(\x)\}
\right)
p\!\left(\mathbf{Y}_{1:K+C}\mid \mcl{D}_n^{1:K+C}\right)
\,d\mathbf{Y}_{1:K+C},
\]
where $\delta$ is the Dirac's delta function and $d\mathbf{Y}_{1:K+C}
\equiv
dY_1\cdots dY_K\,dY_{K+1}\cdots dY_{K+C}.$
Equivalently, we draw one joint sample path from the $(K+C)$-task
multitask GP posterior,
\[
\mathbf{Y}_{1:K+C}(\cdot,\omega)
=
\boldsymbol{\mu}_n(\cdot)
+
\boldsymbol{\Sigma}_n^{1/2}(\cdot)\,Z(\omega),
\qquad
Z(\omega)\sim\mathcal{N}(0,I),
\]
filter the sampled design space by the same feasibility indicators, and
solve
\[
X^\ast
=
\operatorname*{arg\,max}_{x\in\mathcal{X}^{\mathrm{feas}}_{\omega}}
\{Y_1(\x,\omega),\ldots,Y_K(\x,\omega)\},
\]
where
\[
\mathcal{X}^{\mathrm{feas}}_{\omega}
=
\left\{
\x\in\mathcal{X}:
\mathbbm{1}\!\left\{
\bigcap_{i\in E} Y_{K+i}(\x,\omega)=0
\right\}
\mathbbm{1}\!\left\{
\bigcap_{i\in I} Y_{K+i}(\x,\omega)\geq 0
\right\}
=1
\right\}.
\]
This cheap multiobjective optimization problem is solved via evolutionary approaches e.g., NSGA-II~\cite{deb2002fast}. 
Let
$\gamma(\cdot,\cdot):\mathbb{R}^d\times\mathbb{R}^d\rightarrow
\mathbb{R}_{+}$ denote the Euclidean distance between two points in
$\mathcal{X}^{\mathrm{feas}}_{\omega}$.
Then, a batch of $q$ acquisitions is chosen per
\begin{equation}
    \begin{aligned}
\x_{n+1}
&=
\operatorname*{arg\,max}_{\x^\ast\in X^\ast}
\min_{\x_i\in \mbf{X}}
\gamma(\x^\ast,\x_i), \\
\x_{n+2}
&=
\operatorname*{arg\,max}_{\x^\ast\in X^\ast}
\min_{\x_i\in \mbf{X}\cup\{\x_{n+1}\}}
\gamma(\x^\ast,\x_i), \\
&\hspace{0.25in}\vdots \\
\x_{n+q}
&=
\operatorname*{arg\,max}_{\x^\ast\in X^\ast}
\min_{\x_i\in \mbf{X}\cup\{\x_{n+1},\ldots,\x_{n+q-1}\}}
\gamma(\x^\ast,\x_i).
\end{aligned}
\label{eqn:maximin_}
\end{equation}
Once a batch of $q$ points is chosen per \eqref{eqn:maximin_}, we then decide if evaluations must be decoupled, followed by choosing the optimal subset of them to evaluate -- this is presented next.

\subsection{The total correlation metric \texorpdfstring{$\mathrm{TC}(\x)$}{TC(x)} thresholding for decoupling}
\label{sec:total_correlation}
A key advantage of MTGPs is their ability to model {statistical dependence} across tasks.
After conditioning on data $\mathcal{D}$, the predictive distribution at a fixed input $\mathbf{x}$ is Gaussian,
$\mathbf{Y}(\mathbf{x}) \mid \mathcal{D}
  \;\sim\;
  \mathcal{N}\!\big(\boldsymbol{\mu}(\mathbf{x}),\, \Sigma(\mathbf{x})\big),
  ~
  \Sigma(\mathbf{x}) \in \mathbb{R}^{K\times K},$
where $\Sigma(\mathbf{x})$ is the \emph{predictive} covariance matrix across tasks at $\mathbf{x}$. Then, a standard normalized measure of pairwise coupling is the (posterior) correlation coefficient
\[
  \rho_{k\ell}(\mathbf{x})
  \;=\;
  \frac{\Sigma_{k\ell}(\mathbf{x})}{\sqrt{\Sigma_{kk}(\mathbf{x})\,\Sigma_{\ell\ell}(\mathbf{x})}},
  \qquad k\neq \ell,
  \label{eq:pairwise_corr}
\]
and the full correlation matrix can be written compactly as
\begin{equation}
  \mathbf{R}(\mathbf{x})
  \;=\;
  \mathbf{D}(\mathbf{x})^{-\tfrac{1}{2}}\,\Sigma(\mathbf{x})\,\mathbf{D}(\mathbf{x})^{-\tfrac{1}{2}},
  \quad
  \mathbf{D}(\mathbf{x})=\mathrm{diag}\!\big(\Sigma(\mathbf{x})\big).
  \label{eq:corr_matrix}
\end{equation}
While pairwise correlations $\rho_{k\ell}(\mathbf{x})$ are intuitive, they do not provide a single scalar summary of the \emph{overall} dependence among all $K$ tasks at $\mathbf{x}$ -- we overcome this with a ``total correlation'' metric (also called \emph{multi-information})
\cite{watanabe1960information,cover2006elements}, defined as
\begin{equation}
  \mathrm{TC}(\mathbf{x})
  \;\triangleq\;
  D_{\mathrm{KL}}\!\Big(
    p\big(\mathbf{Y}(\mathbf{x})\mid \mathcal{D}\big)
    \,\Big\|\,
    \prod_{k=1}^K p\big(Y_k(\mathbf{x})\mid \mathcal{D}\big)
  \Big),
  \label{eq:tc_def_kl}
\end{equation}
which measures how far the joint predictive distribution across tasks deviates from the product of its (task-wise) marginals. Equivalently, in entropy form,
$ \mathrm{TC}(\mathbf{x})
  \;=\;
  \sum_{k=1}^K H\!\big(Y_k(\mathbf{x})\mid \mathcal{D}\big)
  \;-\;
  H\!\big(\mathbf{Y}(\mathbf{x})\mid \mathcal{D}\big),$
so $\mathrm{TC}(\mathbf{x})\ge 0$, and $\mathrm{TC}(\mathbf{x})=0$ if and only if the tasks are independent under the predictive distribution at $\mathbf{x}$ \cite{cover2006elements}.
Because $\mbf{Y}$ is multivariate Gaussian, $\mathrm{TC}(\mathbf{x})$ has a simple closed form in terms of $\Sigma(\mathbf{x})$:
\begin{equation*}
  \mathrm{TC}(\mathbf{x})
  \;=\;
  \frac{1}{2}\log\!\frac{\big|\mathrm{diag}\big(\Sigma(\mathbf{x})\big)\big|}{\big|\Sigma(\mathbf{x})\big|}
  \;=\;
  \frac{1}{2}\log\!\frac{\prod_{k=1}^K \Sigma_{kk}(\mathbf{x})}{\big|\Sigma(\mathbf{x})\big|} = -\frac{1}{2}\log \big|\mathbf{R}(\mathbf{x})\big|,
  \label{eq:tc_gaussian}
\end{equation*}
where the last equality uses the identity $|\Sigma(\mathbf{x})| = |\mathbf{D}(\mathbf{x})|\,|\mathbf{R}(\mathbf{x})|$.
This highlights that (for Gaussian predictive distributions) total correlation depends only on the correlation structure, not on the marginal scales.

Finally, we use $\mathrm{TC}(\x) \geq \tau$ as a gate to decide if decoupling is warranted at a candidate $\x$ or not, where $\tau > 0$ is a threshold the user must specify. As $\tau \rightarrow \infty$, \qpotsdoe~reduces to qPOTS~\cite{renganathan2025qpots}. Note that $\tau$ represents the amount of information (in nats) that can be transferred at $\x$ between tasks; thus when $\mathrm{TC}(\x) \geq \tau$, $\mathrm{TC}(\x)$ is an upper bound on the maximum possible information transfer from any decoupled subset. 
In practice we observe (and as shown in the ablation in \Cref{fig:ablations}) that choosing $\tau \in [10^{-6}, 10^{0}]$ has negligible impact on performance. Once a candidate $\x$ is chosen, and the total correlation gate ($\mathrm{TC}(\x) > \tau$) passes, an optimal subset of objectives and constraints for decoupled evaluations is chosen by solving an inner optimization problem -- this is described next.

\subsection{Optimal subset for decoupled evaluation}
Without loss of generality, we introduce an {inner} combinatorial optimization problem that selects a subset of $M$ objectives out of $K$.\footnote{In the constrained setting, we would select from $K+C$ instead.} Let $S \subseteq \mathcal{K}\triangleq\{1,\ldots,K\}$ and $\bar{S} \triangleq \mathcal{K}\setminus S,$
so that the selected objectives $\mathbf{Y}_S(\mathbf{x})$ are maximally informative about the remaining objectives $\mathbf{Y}_{\bar{S}}(\mathbf{x})$.
With a cardinality budget $|S|=M$ (that is, we can evaluate $M$ objectives per candidate $\mathbf{x}$), we define the inner problem as
\begin{equation}
  S^\star(\mathbf{x})
  \;\in\;
  \arg\max_{S \subseteq \mathcal{K} \,:\, |S|=M}
  I\!\big(\mathbf{Y}_S(\mathbf{x}); \mathbf{Y}_{\bar{S}}(\mathbf{x}) \mid \mathcal{D}\big).
  \label{eqn:inner_subset_mi}
\end{equation}
This criterion optimally selects $M$ objectives that are (i) strongly coupled with the rest while (ii) avoiding redundancy among selected objectives, since redundant tasks provide little incremental information. We later show how to determine the cardinality budget $M$.
Using the identity $I(A;B)=H(A)+H(B)-H(A,B)$ and the Gaussian differential entropy
$H(\mathbf{Z})=\tfrac{1}{2}\log\!\big((2\pi e)^m|\Sigma_{\mathbf{Z}}|\big)$ for $\mathbf{Z}\in\mathbb{R}^m$
\cite{cover2006elements}, we obtain
\begin{align}
  I\!\big(\mathbf{Y}_S(\mathbf{x}); \mathbf{Y}_{\bar{S}}(\mathbf{x}) \mid \mathcal{D}\big)
  &=
  H\!\big(\mathbf{Y}_S(\mathbf{x})\mid\mathcal{D}\big)
  + H\!\big(\mathbf{Y}_{\bar{S}}(\mathbf{x})\mid\mathcal{D}\big)
  - H\!\big(\mathbf{Y}(\mathbf{x})\mid\mathcal{D}\big)
  \nonumber\\
  &=
  \frac{1}{2}\log
  \frac{
    \big|\Sigma_{SS}(\mathbf{x})\big|\;
    \big|\Sigma_{\bar{S}\bar{S}}(\mathbf{x})\big|
  }{
    \big|\Sigma(\mathbf{x})\big|
  }.
  \label{eq:mi_det_ratio}
\end{align}
Here $\Sigma_{SS}(\mathbf{x})$ and $\Sigma_{\bar{S}\bar{S}}(\mathbf{x})$ are principal submatrices of $\Sigma(\mathbf{x})$ indexed by $S$ and $\bar{S}$, respectively.
Since $\log|\Sigma(\mathbf{x})|$ does not depend on $S$, maximizing \eqref{eq:mi_det_ratio} is equivalent to
\begin{equation*}
  S^\star(\mathbf{x})
  \in
  \arg\max_{S \subseteq \mathcal{K}:\,|S|=M}
  \Big\{
    \log\big|\Sigma_{SS}(\mathbf{x})\big|
    +
    \log\big|\Sigma_{\bar{S}\bar{S}}(\mathbf{x})\big|
  \Big\},
  \label{eq:inner_subset_logdet}
\end{equation*}
which can be interpreted as finding a partition (of fixed size) that makes both sides ``internally uncertain'' while being as inter-dependent as possible. In \Cref{thm:finite_sample_information_transfer}, we show that this rule selects the \emph{best} $M$ out of $K$ oracles to evaluate, in terms of the one-step expected entropy reduction.

\paragraph{Relationship to total correlation.}
Define the within-subset total correlations
$\mathrm{TC}_S(\mathbf{x})$ and $\mathrm{TC}_{\bar{S}}(\mathbf{x})$ analogously but restricted to objectives in $S$ and $\bar{S}$. Using $\mathrm{TC}= \sum_k H(Y_k)-H(\mathbf{Y})$ and elementary cancellations,
\begin{equation*}
  I\!\big(\mathbf{Y}_S(\mathbf{x}); \mathbf{Y}_{\bar{S}}(\mathbf{x}) \mid \mathcal{D}\big)
  =
  \mathrm{TC}(\mathbf{x}) - \mathrm{TC}_S(\mathbf{x}) - \mathrm{TC}_{\bar{S}}(\mathbf{x}),
  \label{eq:mi_tc_decomp}
\end{equation*}
so maximizing cross-subset mutual information (\Cref{eqn:inner_subset_mi}) favors partitions where the \emph{between-group} dependence is large relative to the \emph{within-group} dependence.

The inner problem \eqref{eqn:inner_subset_mi} is combinatorial: it requires first choosing a subset size $M$, and then choosing the optimal subset among $\binom{K}{M}$ combinations. Related mutual-information subset selection problems are known to be NP-hard in GP settings \cite{krause2008near}.
Accordingly, we consider two practical approaches:

\noindent\textbf{(1) Exact maximization (small $K$).}
When $K$ is small (typical in multiobjective optimization), we can evaluate \eqref{eq:mi_det_ratio} for all subsets of size $M$, for any $M$, and select the maximizer.
Each evaluation requires computing log-determinants of principal submatrices; this can be done stably via Cholesky factorization. If $\Sigma_{SS}(\mathbf{x}) = \mathbf{L}\mathbf{L}^\top$, $  \log\big|\Sigma_{SS}(\mathbf{x})\big|
  =
  2\sum_{i}\log L_{ii},$
and similarly for $\Sigma_{\bar{S}\bar{S}}(\mathbf{x})$.

\smallskip
\noindent\textbf{(2) Approximate maximization (moderate/large $K$).}
Define the set function
\begin{equation*}
  f_{\mathbf{x}}(S)
  \;\triangleq\;
  I\!\big(\mathbf{Y}_S(\mathbf{x}); \mathbf{Y}_{\bar{S}}(\mathbf{x}) \mid \mathcal{D}\big).
  \label{eq:set_function_fx}
\end{equation*}
Because $f_{\mathbf{x}}(S)=H(\mathbf{Y}_S)+H(\mathbf{Y}_{\bar{S}})-H(\mathbf{Y})$ and entropy is submodular,
$f_{\mathbf{x}}$ is a (typically non-monotone) \emph{symmetric submodular} function of $S$ \cite{cover2006elements,krause2008near}.
This enables the use of standard submodular maximization heuristics and approximation algorithms
\cite{feige2011maximizing,buchbinder2015approximation}. A simple and effective heuristic under the cardinality constraint $|S|=M$ is a \emph{greedy} construction followed by \emph{swap} local search (starting with $S=\varnothing$):
\begin{equation}
    \begin{split}
  \Delta_{\mathbf{x}}(s\mid S) &\triangleq f_{\mathbf{x}}(S\cup\{s\}) - f_{\mathbf{x}}(S),\\
  s^\star &\in \arg\max_{s\in \mathcal{K}\setminus S} \Delta_{\mathbf{x}}(s\mid S),  
\end{split}
\label{eqn:subset_largek}
\end{equation}
iterated until $|S|=M$, and then refined by swapping $i\in S$ with $j\in\bar{S}$ whenever
$f_{\mathbf{x}}(S\setminus\{i\}\cup\{j\})$ improves. Then, we repeat \Cref{eqn:subset_largek} for $M=1,\ldots,K-1$ to pick M that gives the smallest $f_\x(S)$.
More sophisticated choices (e.g., randomized methods with constant-factor guarantees for non-monotone submodular maximization) can be used when $K$ is large \cite{feige2011maximizing,buchbinder2015approximation}. Our experiments use the exact maximization approach, but we perform an ablation on exact vs approximate maximization in \Cref{fig:ablation_subset_selection}, showing that under moderate $K+C$, the accuracy difference is not substantial.

If objective evaluations at $\mathbf{x}$ are noisy, one can compute the mutual information between the \emph{noisy} observed subset and the \emph{latent} unobserved objectives by replacing the observed covariance block with
$\Sigma_{SS}(\mathbf{x})+\Sigma_{\varepsilon,SS}$ in \eqref{eq:mi_det_ratio}, where $\Sigma_{\varepsilon,SS}$ is the noise covariance for the selected objectives. This makes the subset selection explicitly account for heteroscedastic or task-dependent noise. All the steps involved are summarized in \Cref{alg:qpots-doe}.

\section{Theoretical results}
\label{sec:theory}
We prove two key results: (1) that \qpotsdoe~is asymptotically consistent and (2) under a finite sample $\mcl{D}_n$, any subset of $M$ oracle evaluations chosen per \Cref{eqn:inner_subset_mi} maximizes the one-step expected entropy reduction about the omitted oracles. W.l.o.g, we consider $K$ objectives without constraints. The informal assumptions and results are presented below; but detailed versions, with proofs, are in \Cref{sec:proofs}.

\paragraph{Assumptions.}
We assume that the design space \(\mathcal X\) is compact and that the \(K\) latent oracles are generated by the MTGP model used by the algorithm, with
continuous sample paths. Observations may be decoupled: at round \(i\), the algorithm queries a
design \(\mathbf{x}_i\) but observes only a subset \(S_i\subseteq [K]\), with independent Gaussian
observation noise of finite, nonzero variance for each observed oracle. The only exploration
condition is a no-starvation requirement: every oracle must be directly evaluated on design points
that become dense in \(\mathcal X\). Equivalently, decoupling is allowed, but no task may be
permanently ignored -- this condition can be enforced by a lightweight safeguard such as occasional
coupled evaluations (see line $16$ in \Cref{alg:qpots-doe}).

\begin{theorem}[Informal: consistency of \qpotsdoe]
\label{thm:informal-consistency}
Under Assumption~\ref{ass:regularity-no-starvation}, decoupled evaluations do
not prevent posterior consistency. In particular, if every task is directly
evaluated on a set of design points that becomes dense in $\mathcal X$, then
the multitask GP posterior learned from the partially observed data
$\mathcal D_n$ concentrates on the true task values. Equivalently, for any
fixed design point $\mathbf x\in\mathcal X$ and every task $k\in[K]$, the
posterior mean converges to the true value $Y_{0,k}(\mathbf x)$ and the
posterior uncertainty vanishes as $n\to\infty$.
\end{theorem}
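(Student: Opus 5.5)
The plan is to reduce the multitask statement to $K$ single-task consistency problems. The key observation is monotonicity of Gaussian conditioning: conditioning on more data never increases posterior variance. Fix $\mathbf x\in\mathcal X$ and a task $k$. Let $\mathcal D_n^{(k)}\subseteq\mathcal D_n$ be the observations of oracle $k$ alone, i.e.\ the pairs $(\mathbf x_i, \tilde y_{i,k})$ with $k\in S_i$. Because the MTGP posterior is obtained by conditioning a joint Gaussian, we have
\begin{equation*}
\operatorname{Var}\big(Y_k(\mathbf x)\mid\mathcal D_n\big)\le \operatorname{Var}\big(Y_k(\mathbf x)\mid\mathcal D_n^{(k)}\big).
\end{equation*}
The right-hand side is the posterior variance of a scalar GP with kernel $[\mathbf K(\cdot,\cdot)]_{kk}$, conditioned on noisy observations at the points $\{\mathbf x_i: k\in S_i\}$. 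So the cross-task information only helps, and it suffices to show that this single-task variance vanishes.

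Next I will bound the single-task variance. By the no-starvation part of Assumption~\ref{ass:regularity-no-starvation}, these points become dense in $\mathcal X$. For any $\epsilon>0$ they eventually fill a small ball $B_\delta(\mathbf x)$ infinitely often, with counts $m_n\to\infty$. I will compare against the estimator that averages the $m_n$ observations of task $k$ inside the ball. Its error splits into two parts:
\begin{itemize}
\item a noise part, of variance $\sigma_k^2/m_n\to 0$;
\item a bias part, bounded by $\sup_{\mathbf x'\in B_\delta}\mathbb E\,|Y_k(\mathbf x')-Y_k(\mathbf x)|^2$, which is small by continuity of the kernel (mean-square continuity of the sample paths).
\end{itemize}
Since the posterior mean minimizes mean-squared error among all measurable functions of the data, the posterior variance, averaged over the prior, is at most the sum of these two terms. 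Because the variance of a Gaussian conditional does not depend on the observed values once the design is fixed, the bound holds pathwise given the design. Letting $n\to\infty$ and then $\delta\to0$ gives $\operatorname{Var}(Y_k(\mathbf x)\mid\mathcal D_n)\to0$.

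Convergence of the mean then follows from a martingale argument. The sequence $\mathbb E[Y_k(\mathbf x)\mid\mathcal D_n]$ is an $L^2$-bounded martingale under the Bayesian model in which $Y_0$ is drawn from the MTGP prior, so it converges a.s.\ and in $L^2$ to $\mathbb E[Y_k(\mathbf x)\mid\mathcal D_\infty]$. The vanishing variance forces this limit to equal $Y_{0,k}(\mathbf x)$ for almost every truth. This establishes Theorem~\ref{thm:informal-consistency} pointwise, and by a countable dense set plus continuity it extends to all of $\mathcal X$.

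The main obstacle is that the design is adaptive. Both the points $\mathbf x_i$ chosen by Thompson sampling and the subsets $S_i$ chosen through the TC gate and \eqref{eqn:inner_subset_mi} depend on past observations, so the data are not a fixed design. I would handle this in two ways:
\begin{itemize}
\item Note that Gaussian conditioning formulas remain valid when each design choice is measurable with respect to the past sigma-field, and the noise is independent of it. So the variance comparison and the bias/noise bound hold conditionally on the realized design sequence.
\item Use the fact that the martingale argument is unaffected by adaptivity.
\end{itemize}
The density requirement is assumed rather than derived. Making the bias bound uniform over the ball requires the kernel to be continuous on the compact set $\mathcal X$, which I would state explicitly as part of the regularity assumption.
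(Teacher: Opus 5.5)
Your proposal is correct and follows the paper's proof essentially step for step: you compare the multitask posterior variance with the scalar posterior built from task-$k$ data only, bound the latter by the mean-squared error of a local average (a mean-square-continuity bias term plus $\sigma_k^2/m_n$ noise), and identify the limit of the posterior mean through $L^2$ martingale convergence. The remaining differences are either cosmetic or in your favor: you fix a radius $\delta$ and then let $\delta\to 0$ instead of using the paper's shrinking radius $r_n$, and you condition explicitly on the realized adaptive design, which the paper leaves implicit. Your closing extension to all of $\mathcal X$ via a countable dense set is unnecessary for this pointwise statement, and as a simultaneous claim it would need an equicontinuity argument to be valid.
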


\begin{theorem}[Informal: finite-sample value of decoupling]
\label{thm:informal_finite_sample_decoupling}
Fix the current data \(\mcl D_n\) and a candidate design \(\mathbf{x}\). If only \(M<K\) oracles
can be evaluated at \(\mathbf{x}\), then the \qpotsdoe~subset rule
\[
S_n^\star(\mathbf{x})
\in
\arg\max_{S\subseteq [K]:\, |S|=M}
I\!\left(Y_S(\mathbf{x});Y_{\bar S}(\mathbf{x})\mid \mcl{D}_n\right)
\]
selects the \(M\) oracles whose observation gives the largest expected one-step reduction
in posterior uncertainty about the unevaluated oracles \(Y_{\bar S}(\mathbf{x})\).
\end{theorem}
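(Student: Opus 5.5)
The plan is to identify the quantity ``expected one-step reduction in posterior uncertainty about the unevaluated oracles'' with a conditional mutual information. Then, using Gaussianity of the MTGP predictive distribution at the fixed $\mathbf{x}$, I will show that this mutual information is exactly the objective in \eqref{eqn:inner_subset_mi}, given in closed form by \eqref{eq:mi_det_ratio}. Once the two objectives coincide for every $S$ with $|S|=M$, their maximizers over the finite family $\{S\subseteq[K]:|S|=M\}$ coincide, and that is the claim.

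\textbf{Step 1 (define the utility).} Fix $\mathcal D_n$ and $\mathbf{x}$. For a candidate subset $S$, let $\mathbf{Z}_S=\mathbf{Y}_S(\mathbf{x})+\boldsymbol\varepsilon_S$ denote the (possibly noisy) observation. Define the one-step value of evaluating $S$ as
\[
U_n(S)\;=\;H\big(\mathbf{Y}_{\bar S}(\mathbf{x})\mid\mathcal D_n\big)\;-\;\mathbb{E}_{\mathbf{Z}_S\mid\mathcal D_n}\Big[H\big(\mathbf{Y}_{\bar S}(\mathbf{x})\mid\mathcal D_n,\mathbf{Z}_S\big)\Big].
\]
By the definition of conditional entropy, the expected term equals $H(\mathbf{Y}_{\bar S}\mid \mathbf{Z}_S,\mathcal D_n)$. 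Hence $U_n(S)=I(\mathbf{Z}_S;\mathbf{Y}_{\bar S}(\mathbf{x})\mid\mathcal D_n)$.

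\textbf{Step 2 (Gaussian computation).} The pair $(\mathbf{Z}_S,\mathbf{Y}_{\bar S}(\mathbf{x}))\mid\mathcal D_n$ is jointly Gaussian. Its covariance has blocks $\Sigma_{SS}(\mathbf{x})+\Sigma_{\varepsilon,SS}$, $\Sigma_{S\bar S}(\mathbf{x})$ and $\Sigma_{\bar S\bar S}(\mathbf{x})$. The posterior covariance of $\mathbf{Y}_{\bar S}$ given $\mathbf{Z}_S$ is the Schur complement
\[
\Sigma_{\bar S\bar S}-\Sigma_{\bar SS}\big(\Sigma_{SS}+\Sigma_{\varepsilon,SS}\big)^{-1}\Sigma_{S\bar S}.
\]
This does not depend on the realized value of $\mathbf{Z}_S$, so the outer expectation in Step 1 is trivial. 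Applying the Gaussian entropy formula together with the Schur determinant identity gives
\[
U_n(S)=\tfrac12\log\frac{|\Sigma_{SS}+\Sigma_{\varepsilon,SS}|\,|\Sigma_{\bar S\bar S}|}{\big|\begin{smallmatrix}\Sigma_{SS}+\Sigma_{\varepsilon,SS}&\Sigma_{S\bar S}\\ \Sigma_{\bar SS}&\Sigma_{\bar S\bar S}\end{smallmatrix}\big|}.
\]
In the latent (noise-free) case this is exactly \eqref{eq:mi_det_ratio}, i.e.\ $I(\mathbf{Y}_S;\mathbf{Y}_{\bar S}\mid\mathcal D_n)$. In the noisy case it is the noise-augmented version described at the end of the method section.

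\textbf{Step 3 (conclude).} Since $U_n(S)=I(\mathbf{Y}_S(\mathbf{x});\mathbf{Y}_{\bar S}(\mathbf{x})\mid\mathcal D_n)$ for every admissible $S$, any $S_n^\star(\mathbf{x})$ in the argmax of the rule maximizes $U_n$. Ties carry over unchanged. The maximum exists because the feasible family is finite.

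\textbf{Main obstacle.} The delicate point is the mismatch between the informal statement, which uses the latent $I(Y_S;Y_{\bar S})$, and the noisy observation model of the Assumptions. I would handle it by stating the detailed theorem for the noise-augmented covariance, recovering the stated rule in the noise-free limit $\Sigma_\varepsilon\to0$. I would also require that $\Sigma(\mathbf{x})$ be positive definite, which is automatic once the observation noise is nonzero, so that all log-determinants are finite. If $\Sigma(\mathbf{x})$ is singular in the latent case, I would argue by a limiting argument $\Sigma+\epsilon I$ or simply allow $U_n=+\infty$ consistently on both sides.
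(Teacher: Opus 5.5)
Your proposal is correct and follows the paper's proof essentially step for step. You identify the expected one-step entropy reduction with $I(Z_S;Y_{\bar S}\mid\mathcal D_n)$, compute the realization-independent Gaussian Schur complement, and obtain the determinant-ratio closed form. You then recover the latent criterion in the noiseless case and conclude that the argmaxes coincide over the finite family of size-$M$ subsets.

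One small correction to your side remark: nonzero observation noise makes $\Sigma_{SS}(\mathbf{x})+\Sigma_{\varepsilon,SS}$ invertible, but it does not make the latent posterior covariance $\Sigma(\mathbf{x})$ positive definite. For example, a rank-deficient ICM matrix $\mathbf{B}$ gives a singular $\Sigma(\mathbf{x})$ regardless of noise, which is why the paper simply assumes $\Sigma_n(\mathbf{x})\succ 0$.
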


\section{Empirical results}
\label{sec:experiments}
We demonstrate our methodology on several synthetic and real-world benchmarks varying in input ($d$), output ($K$), and constraint $(C)$ dimensionality. We focus on evaluating high $K+C$ dimensionality since the benefit of decoupling is the primary objective of this paper; high $d$ scaling is focus of future work.
We compare \qpotsdoe~ against qPOTS~\cite{renganathan2025qpots} (no decoupling) and qLogNEHVI~\cite{daulton2020differentiable, ament2023unexpected} (a classical baseline, also with no decoupling) and four decoupled baselines: HVKG~\cite{daulton_hypervolume_nodate} (our main competitor), PESMO~\cite{garrido2016pesmoc}, MESMO~\cite{belakaria2019max}, and JESMO~\cite{tu2022joint}, all under uniform costs.  

For {\bf synthetic experiments}, we consider the Branin-Currin, ZDT1/2, DTLZ4/5, Gaussian mixture model (GMM), BNH~\cite{garrido2020parallel}, OSY~\cite{osyczka1995new}, and CONSTR~\cite{garrido2020parallel} test functions. Additional test functions are included in the supplementary material.  For {\bf real-world experiments}, we consider the Vehicle Safety problem~\cite{tanabe2020easy}, the Penicillin production problem~\cite{liang2021scalable}, the Car side-impact problem~\cite{jain2013evolutionary}, and the disc brake problem~\cite{tanabe2020easy}. Further details on all experiments, including the reference point $\mbf{r}$ (\Cref{tab:test-cases-summary}), are provided in the Appendix. 

We use MTGPs with an
ICM kernel~\cite{bonilla2008multi} for task correlation and the anisotropic Matern for input correlation, for qPOTS and \qpotsdoe, and independent ``single task'' GPs for all other methods (as they are designed to be). 
We provide each experiment with a set of $n=10\times d$ seed samples chosen uniformly at random from $\mcl{X}$, and \emph{fully coupled} observations to start the algorithm; then they are repeated $10$ times to observe mean and $\max/\min$ performance. Our metric for comparison is the hypervolume indicator~\cite{yang2019efficient} and the number of oracle evaluations.
 Our implementation primarily builds on GPyTorch~\cite{gardner2018gpytorch} and BoTorch~\cite{balandat2019botorch}; we leverage 
MPI4Py~\cite{dalcin2021mpi4py} to parallelize our repetitions on CPUs. The full source code is available at \url{https://github.com/csdlpsu/qpots}.

\begin{figure}[tb]
\centering
    \includegraphics[width=.55\textwidth]{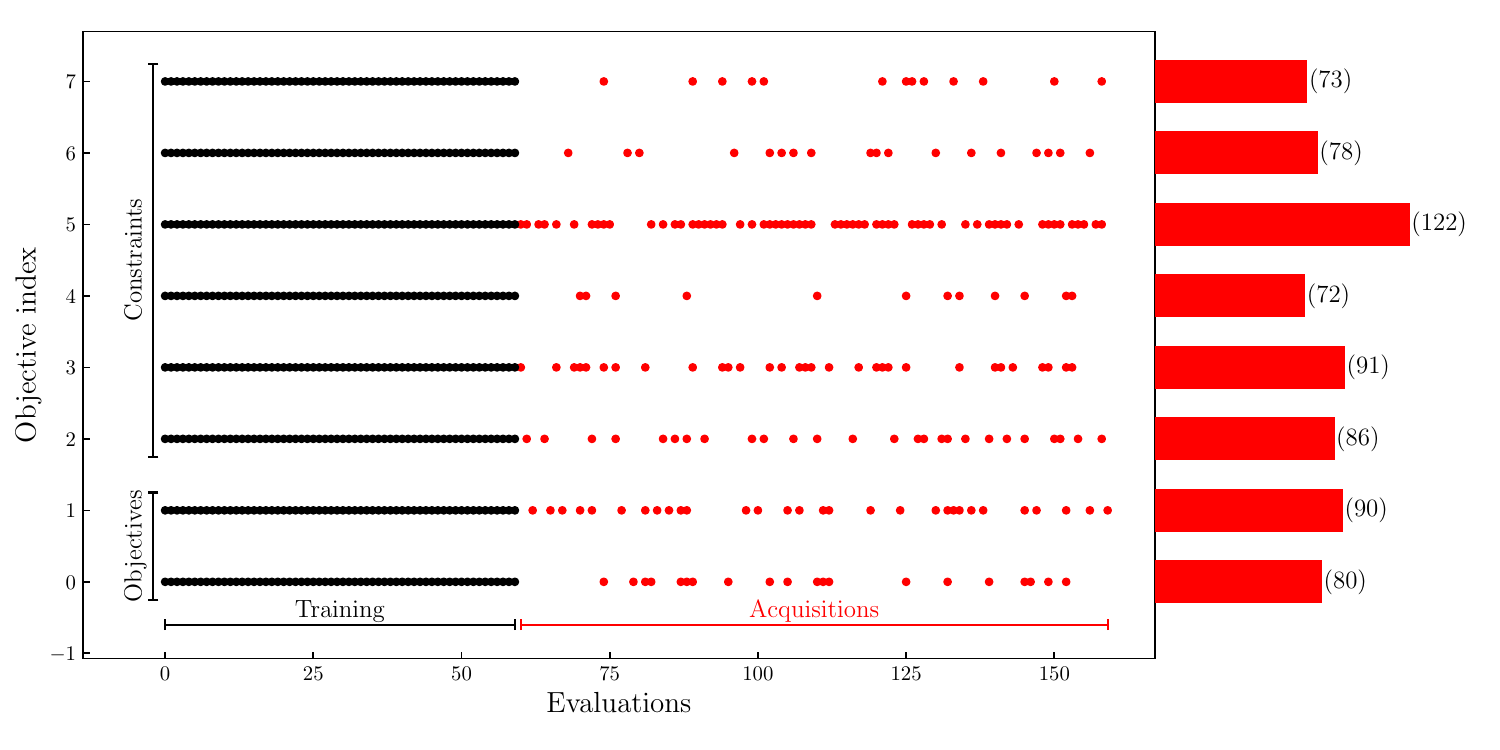}
    \caption{\qpotsdoe~decoupled evaluations (OSY)}
    \label{fig:osy_oracle_evals}
\end{figure}
The results of the synthetic experiments are shown in \Cref{fig:synthetic} -- notice that \qpotsdoe~is the clear winner in $6/8$ experiments (BNH, DTLZ4-5, ZDT2, OSY, CONSTR); in the remaining $2$, it is still no worse than others; crucially, \qpotsdoe~outperforms HVKG in all of them. Note that PESMO/JESMO/MESMO were run until we ran out of memory, and hence are missing in some experiments. 
The real-world experiments are shown in \Cref{fig:realworld}, where \qpotsdoe~beats all competitors in all cases. 
\Cref{fig:osy_oracle_evals} shows how \qpotsdoe~picks the objective and constraint evaluation at each round, for the OSY problem ($K=2, C=6$) at $q=1$ (additional illustrations are shown in \Cref{fig:osy_evaluations}). In this example, if decoupling were not allowed, then the total objective and constraint evaluations would have been $160\times 8 = 1280$ -- on the other hand, \qpotsdoe~required only $692$ evaluations total ($46\%$ fewer evaluations).

{\bf Ablations.} We set $q=4$ for all experiments; however, an ablation on $q$ (see \Cref{fig:q_ablations}) shows that \qpotsdoe~maintains its performance across $q$~\footnote{Note that the perceived difference in the plots is an artifact of the changing x-axis scale.}. We also did an ablation on $\tau$, varying it in $[10^{-6}, 1]$ for the OSY and Branin-Currin experiments (\Cref{fig:ablations}); note that when $\tau=\infty$, \qpotsdoe~reduces to \qpots~as expected. We found that any choice between $[10^{-6}, 1)$ had a marginal impact on performance; as a default choice, we set (and recommend) $\tau = 10^{-4}$ in our method. We observe that identifying an optimal $\tau$ for each problem is not critical for the overall performance. We include additional ablations on the kernel choice, run-time comparisons, and the benefit of multitask over single-task GPs, in the Appendix.

\begin{figure*}[htbp]
    \centering
    \begin{subfigure}{.25\textwidth}
        \centering
        \includegraphics[width=1\linewidth]{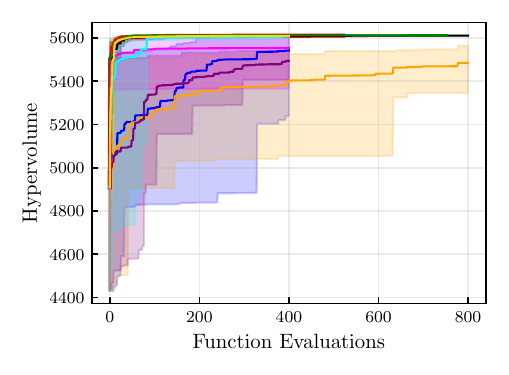}
        \caption{Branin-Currin $2/2/-$}
    \end{subfigure}%
    \begin{subfigure}{.25\textwidth}
        \centering
        \includegraphics[width=1\linewidth]{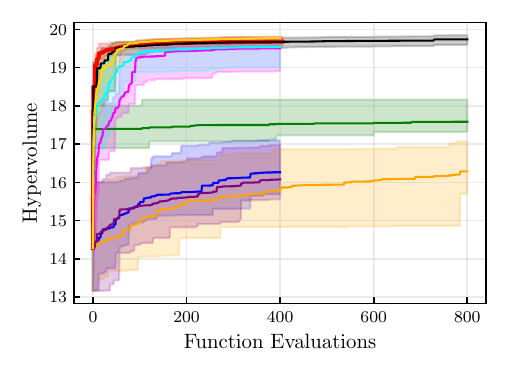}
        \caption{ZDT1 $2/5/-$}
    \end{subfigure}%
    \begin{subfigure}{.25\textwidth}
        \centering
        \includegraphics[width=1\linewidth]{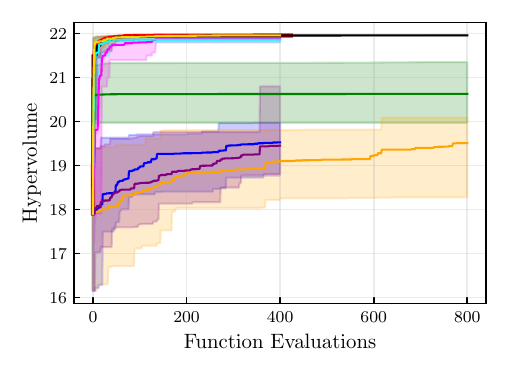}
        \caption{ZDT2 $2/5/-$}
    \end{subfigure}%
    \begin{subfigure}{.25\textwidth}
        \centering
        \includegraphics[width=1\linewidth]{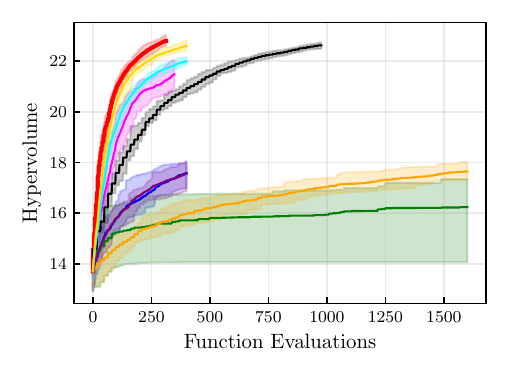}
        \caption{DTLZ4 $4/5/-$}
    \end{subfigure}\\
    \begin{subfigure}{.25\textwidth}
        \centering
        \includegraphics[width=1\linewidth]{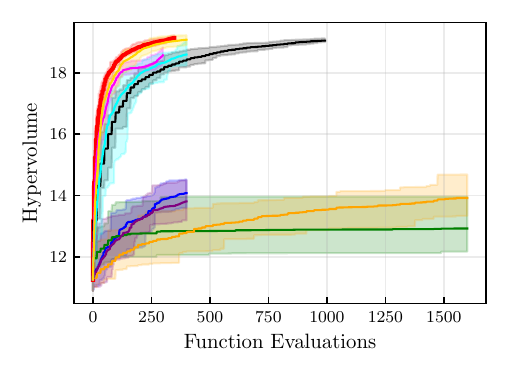}
        \caption{DTLZ5 $4/5/-$}
    \end{subfigure}%
    \begin{subfigure}{.25\textwidth}
        \centering
        \includegraphics[width=1\linewidth]{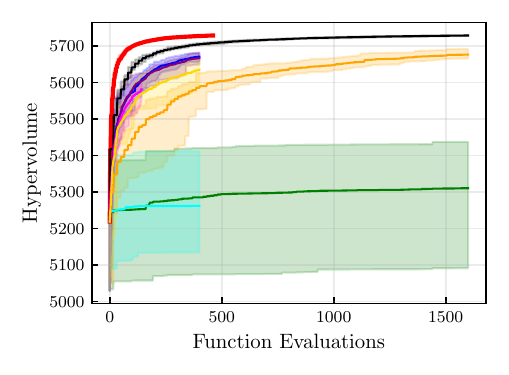}
        \caption{BNH $2/2/2$}
    \end{subfigure}%
    \begin{subfigure}{.25\textwidth}
        \centering
        \includegraphics[width=1\linewidth]{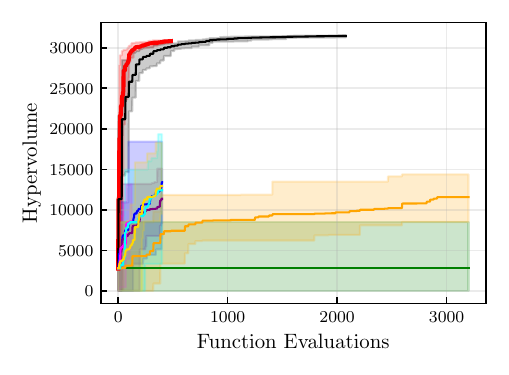}
        \caption{OSY $6/2/6$}
    \end{subfigure}%
    \begin{subfigure}{.25\textwidth}
        \centering
        \includegraphics[width=1\linewidth]{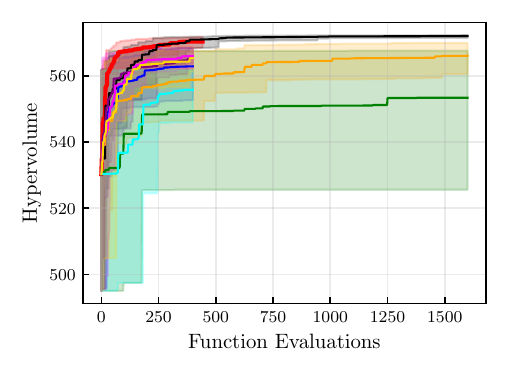}
        \caption{CONSTR $2/2/2$}
    \end{subfigure}\\
    \begin{subfigure}{1\textwidth}
        \includegraphics[width=1\linewidth]{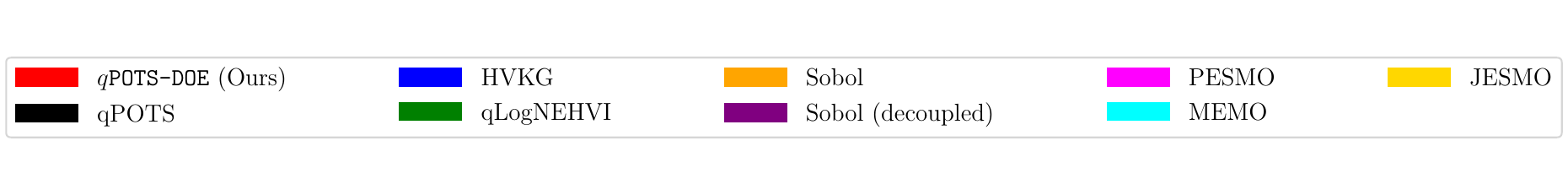}
    \end{subfigure}          
    \caption{Synthetic experiments. The numbers in the labels follow $d/K/C$ dimensions. Solid lines are the mean and filled colors are the $\min/\max$ range over $10$ repetitions.}
    \label{fig:synthetic}
\end{figure*}

\begin{figure*}[htbp]
    \centering
    \begin{subfigure}{.25\textwidth}
        \centering
        \includegraphics[width=1\linewidth]{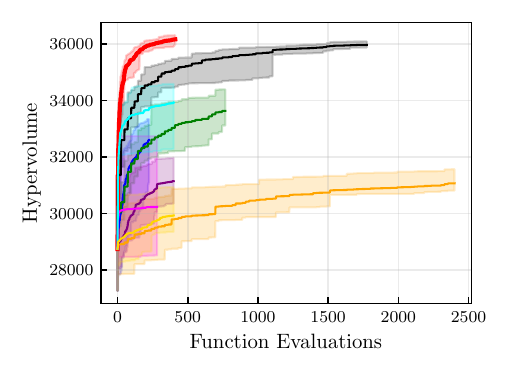}
        \caption{Car side-impact $7/4/-$}
    \end{subfigure}%
    \begin{subfigure}{.25\textwidth}
        \centering
        \includegraphics[width=1\linewidth]{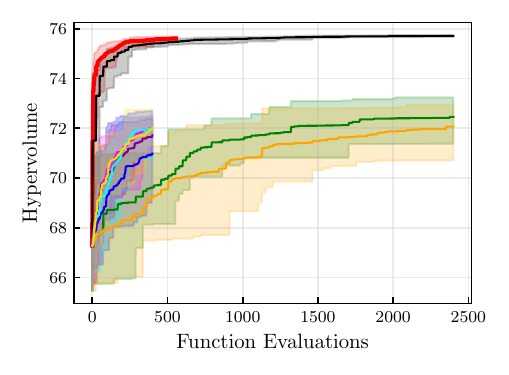}
        \caption{Disc brake $4/2/4$}
    \end{subfigure}%
    \begin{subfigure}{.25\textwidth}
        \centering
        \includegraphics[width=1\linewidth]{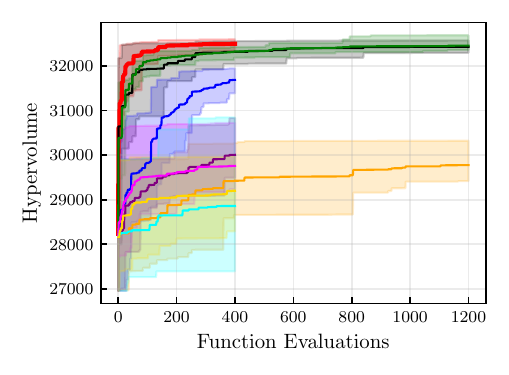}
        \caption{Vehicle safety $5/3/-$}
    \end{subfigure}%
    \begin{subfigure}{.25\textwidth}
        \centering
        \includegraphics[width=1\linewidth]{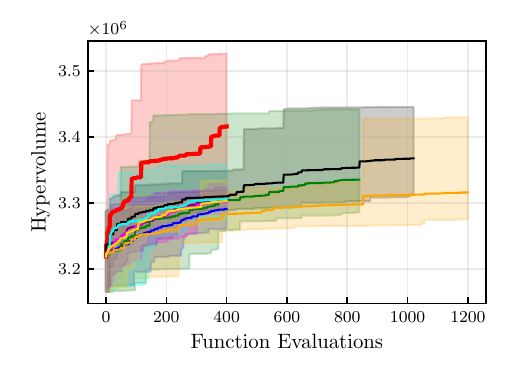}
        \caption{Penicillin $7/3/-$}
    \end{subfigure}\\
    \begin{subfigure}{1\textwidth}
        \includegraphics[width=1\linewidth]{figures/legend_two_row.pdf}
    \end{subfigure}    
    \caption{Real-world experiments. Solid lines are the mean and filled colors are the $\min/\max$ range over $10$ repetitions.}
    \label{fig:realworld}
\end{figure*}

\begin{figure*}
    \centering
        \begin{subfigure}{.33\textwidth}
        \centering
        \includegraphics[width=1\linewidth]{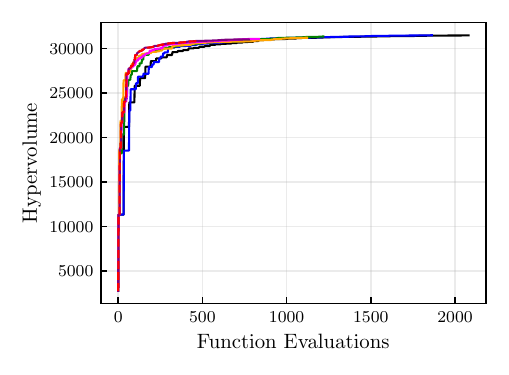}
        \caption*{OSY}
    \end{subfigure}%
    \begin{subfigure}{.33\textwidth}
        \centering
        \includegraphics[width=1\linewidth]{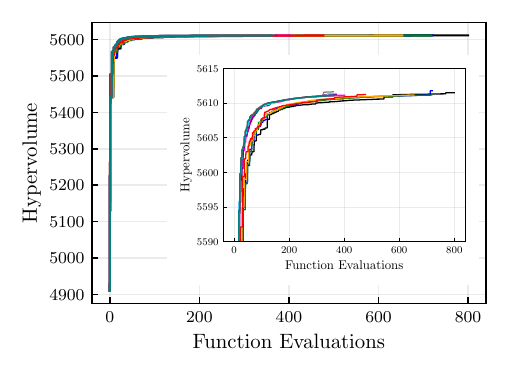}
        \caption*{Branin-Currin (inset: zoom-in)}
    \end{subfigure}%
    \begin{subfigure}{.34\textwidth}
        \centering
        \includegraphics[width=1\linewidth]{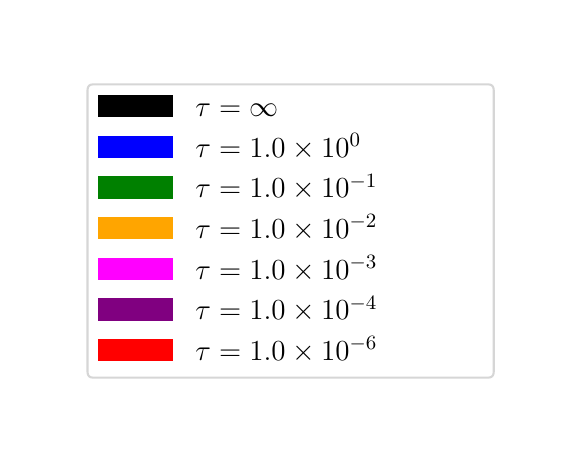}
        \caption*{}
    \end{subfigure}%
    \caption{{\bf Ablation studies.} Top row: ablation on $q$ (OSY problem); bottom row: ablation on $\tau$. 
    }
    \label{fig:ablations}
\end{figure*}

\section{Conclusions}
\label{sec:conclusions}
We introduced \qpotsdoe, a batch Thompson sampling framework for constrained multiobjective Bayesian optimization that exploits correlations among objectives and constraints to support decoupled oracle evaluations under uniform evaluation costs. By replacing independent surrogate models with a multitask GP, \qpotsdoe~preserves cross-task posterior structure; by using total correlation as a decoupling gate and Gaussian mutual information to select which objectives or constraints to evaluate, it turns correlation into a practical mechanism for reducing redundant evaluations. Theoretical analysis shows that the resulting policy remains asymptotically consistent under dense-design and no-starvation , and that our subset rule for decoupling is optimal, while the empirical results on synthetic and real-world benchmarks indicate that \qpotsdoe~improves hypervolume performance relative to several existing coupled and decoupled baselines. Overall, the method broadens the role of decoupled evaluations in MOBO: decoupling need not rely only on unequal costs, but can also arise from statistical dependence among expensive oracles. Future work should focus on high (input) dimensional scaling, testing on additional gains due to variable costs, and robustness to misspecified task correlations.

{\bf Limitations}. \qpotsdoe~is most effective when objectives and constraints exhibit exploitable posterior correlations; when such correlations are weak or poorly captured by the MTGP kernel, the gains from decoupling may diminish, although the TC threshold allows the method to revert toward coupled evaluation. The method also introduces hyperparameters, notably the TC threshold and the decoupled task budget, and while our ablations suggest limited sensitivity to the threshold, adaptive selection remains a direction to explore. 

\clearpage

\bibliographystyle{plainnat}
\bibliography{references}

\clearpage
\appendix

\section{Supplementary material}

\subsection{Additional experiments and ablations}

\paragraph{Details on experiments.}
The Penicillin production problem ($d=7, K=3$) seeks to maximize Penicillin yield while minimizing both the production time and the carbon dioxide byproduct emission. The $d=7$ input variables include the culture medium volume, biomass concentration, temperature (in K), glucose substrate concentration, substrate feed rate, substrate feed concentration and the $H+$ concentration~\cite{liang2021scalable}. The Vehicle Safety problem ($d=5, K=3$)~\cite{tanabe2020easy} concerns the design of automobiles for enhanced safety. Specifically, we consider the minimization of vehicle mass, the vehicle acceleration during a full-frontal crash, and the toe-board intrusion---a measure of the mechanical damage to the vehicle during an off-frontal crash, with respect to $d=5$ design variables which represent the reinforced parts of the frontal frame; further details are found in~\cite{liao2008multiobjective}. The Car side-impact problem is aimed at minimizing the weight of the car while minimizing the pubic force experienced by a passenger and the
average velocity of the V-pillar responsible for withstanding the impact load. Additionally, as the fourth objective, the maximum constraint violation of $10$ constraints which include limiting values of abdomen load, pubic
force, velocity of V-pillar, and rib deflection is also added~\cite{jain2013evolutionary}.

\paragraph{Experimental settings.}
Additionally, we also include Sobol sampling~\cite{sobol1967distribution} with and without decoupling. We implemented decoupled variants of PESMO, MESMO, and JESMO by introducing an evaluation mask that specifies which objective is observed at a candidate design, and by optimizing over the input ($\x$) - objective index ($k$) pairs rather than over $\x$ alone. For PESMO, the entropy reduction is restricted to the masked objective term, while for MESMO and JESMO we use a projection-based approximation that conditions only on the observed objective by projecting the posterior and sampled Pareto geometry onto the corresponding subspace and recomputing the lower-dimensional box decomposition. 

\begin{algorithm}[t]
\caption{\qpotsdoe: Batch Pareto optimal Thompson sampling with Decoupled Oracle Evaluations}
\label{alg:qpots-doe}
\begin{algorithmic}[1]
\Require Initial data $\mathcal{D}_0$; objective and constraint oracles $f_k,~k=1,2,3,\ldots$ and $c_j,~j\in \mcl{I} \cup \mcl{E}$;
batch size $q$; decoupling threshold $\tau$; rounds $T$
\For{$i = 0,\ldots,T-1$}
    \State Fit one multitask Gaussian process to all observations in $\mathcal{D}_i$
    \State Draw one joint Thompson sample of all objectives and constraints
    \State Find the feasible Pareto set under this sampled model
    \State Select $q$ diverse candidate designs from the sampled Pareto set
    \For{each candidate design $\x$}
        \State Compute a dependence score $\mathrm{TC}(\x)$ between the objective/constraint tasks
        \If{$\mathrm{TC} < \tau$}
            \State Evaluate all objectives and constraints at $\x$
            \Comment{coupled evaluation}
        \Else
            \State Identify $M$, and choose $M$ tasks that are most informative about the remaining tasks
            \State Evaluate only those selected tasks at $\x$
            \Comment{decoupled evaluation}
        \EndIf
        \State Add the new observations to $\mathcal{D}_{i+1}$
    \EndFor
    \State (optional) Occasionally force rarely observed tasks to be evaluated
    \Comment{prevents task starvation}
\EndFor
\State \Return The feasible designs and estimated Pareto frontier from the final model
\end{algorithmic}
\end{algorithm}
\begin{figure*}[p]
    \centering
    \begin{subfigure}{.28\textwidth}
        \centering
        \includegraphics[width=1\linewidth]{figures/branincurrin_q4_HV_History_Scaled.pdf}
        \caption{BC $2/2/-$}
    \end{subfigure}%
    \begin{subfigure}{.28\textwidth}
        \centering
        \includegraphics[width=1\linewidth]{figures/bnh_HV_History_Scaled.pdf}
        \caption{BNH $2/2/2$}
    \end{subfigure}%
    \begin{subfigure}{.28\textwidth}
        \centering
        \includegraphics[width=1\linewidth]{figures/Negate=False_dtlz4_dim5_obj4_HV_History_Scaled.pdf}
        \caption{DTLZ4 $5/4/-$}
    \end{subfigure}\\
    \begin{subfigure}{.28\textwidth}
        \centering
        \includegraphics[width=1\linewidth]{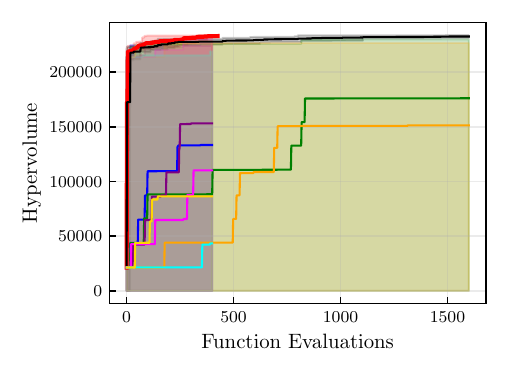}
        \caption{SRN $2/2/2$}
    \end{subfigure}%
    \begin{subfigure}{.28\textwidth}
        \centering
        \includegraphics[width=1\linewidth]{figures/Negate=False_dtlz5_dim5_obj4_HV_History_Scaled.pdf}
        \caption{DTLZ5 $5/4/-$}
    \end{subfigure}%
    \begin{subfigure}{.28\textwidth}
        \centering
        \includegraphics[width=1\linewidth]{figures/constr_HV_History_Scaled.pdf}
        \caption{CONSTR $2/2/2$}
    \end{subfigure}\\
    \begin{subfigure}{.28\textwidth}
        \centering
        \includegraphics[width=1\linewidth]{figures/Negate=False_zdt1_dim5_obj2_HV_History_Scaled.pdf}
        \caption{ZDT1 $5/2/-$}
    \end{subfigure}%
    \begin{subfigure}{.28\textwidth}
        \centering
        \includegraphics[width=1\linewidth]{figures/Negate=False_zdt2_dim5_obj2_HV_History_Scaled.pdf}
        \caption{ZDT2 $5/2/-$}
    \end{subfigure}%
    \begin{subfigure}{.28\textwidth}
        \centering
        \includegraphics[width=1\linewidth]{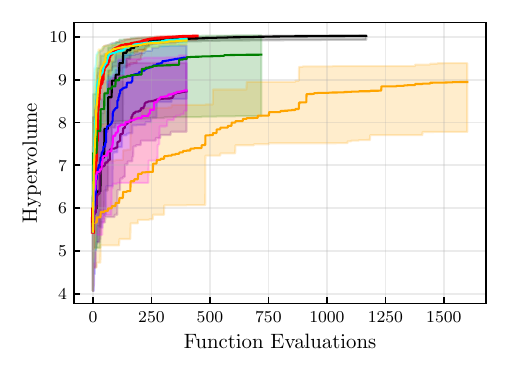}
        \caption{GMM $2/4/-$}
    \end{subfigure}\\
    \begin{subfigure}{.28\textwidth}
        \centering
        \includegraphics[width=1\linewidth]{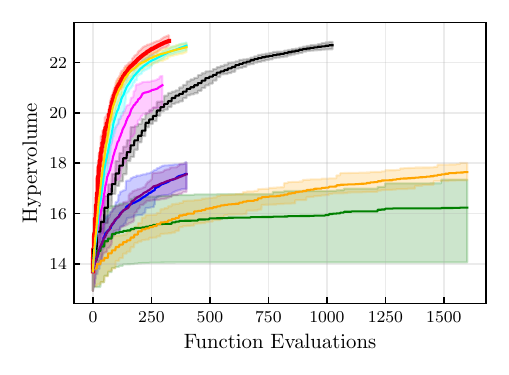}
        \caption{DTLZ2 $5/4/-$}
    \end{subfigure}%
    \begin{subfigure}{.28\textwidth}
        \centering
        \includegraphics[width=1\linewidth]{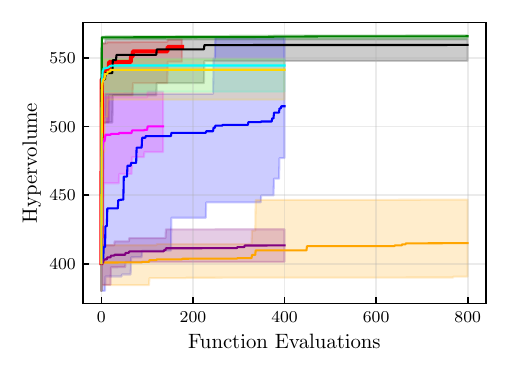}
        \caption{DTLZ7 $20/2/-$}
    \end{subfigure}%
    \begin{subfigure}{.28\textwidth}
        \centering
        \includegraphics[width=1\linewidth]{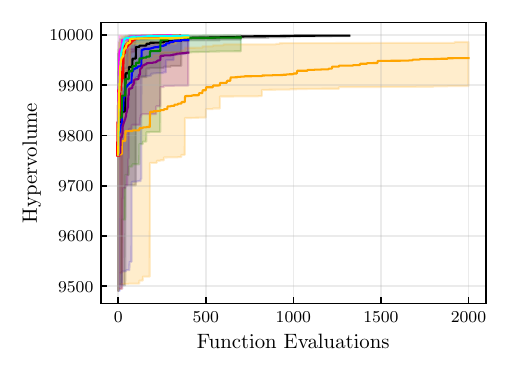}
        \caption{C2DTLZ2 $5/4/1$}
    \end{subfigure}\\
    \begin{subfigure}{.28\textwidth}
        \centering
        \includegraphics[width=1\linewidth]{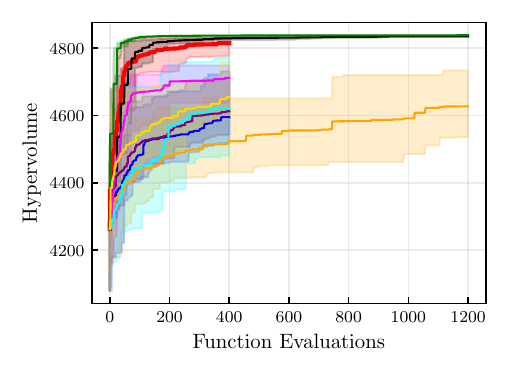}
        \caption{Cons. Branin-Currin $2/2/1$}
    \end{subfigure}%
    \begin{subfigure}{.28\textwidth}
        \centering
        \includegraphics[width=1\linewidth]{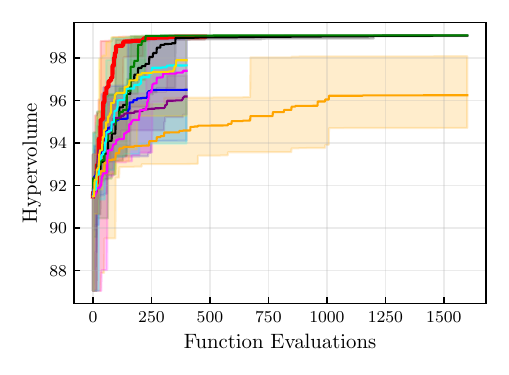}
        \caption{MW7}
    \end{subfigure}%
    \begin{subfigure}{.28\textwidth}
        \centering
        \includegraphics[width=1\linewidth]{figures/osy_q4_HV_History_Scaled.pdf}
        \caption{OSY}
    \end{subfigure}\\
    \begin{subfigure}{.85\textwidth}
        \includegraphics[width=1\linewidth]{figures/legend_two_row.pdf}
    \end{subfigure}    
    \caption{Synthetic experiments}
    \label{fig:synthetic_app}
\end{figure*}

\begin{figure*}[htbp]
    \centering
    \begin{subfigure}{.5\textwidth}
        \centering
        \includegraphics[width=1\linewidth]{figures/carside_q4_HV_History_Scaled.pdf}
        \caption{Car side-impact}
    \end{subfigure}%
    \begin{subfigure}{.5\textwidth}
        \centering
        \includegraphics[width=1\linewidth]{figures/discbrake_q4_HV_History_Scaled.pdf}
        \caption{Disc brake}
    \end{subfigure}\\
    \begin{subfigure}{.5\textwidth}
        \centering
        \includegraphics[width=1\linewidth]{figures/penicillin_q4_HV_History_Scaled.pdf}
        \caption{Penicillin}
    \end{subfigure}%
    \begin{subfigure}{.5\textwidth}
        \centering
        \includegraphics[width=1\linewidth]{figures/vehicle_q4_HV_History_Scaled.pdf}
        \caption{Vehicle safety}
    \end{subfigure}\\
    \begin{subfigure}{1\textwidth}
        \includegraphics[width=1\linewidth]{figures/legend_two_row.pdf}
    \end{subfigure}  
    \label{fig:realworld_app}
    \caption{Real-world experiments.}
\end{figure*}

\begin{figure*}[htbp]
    \centering
    \begin{subfigure}{.25\textwidth}
        \centering
        \includegraphics[width=1\linewidth]{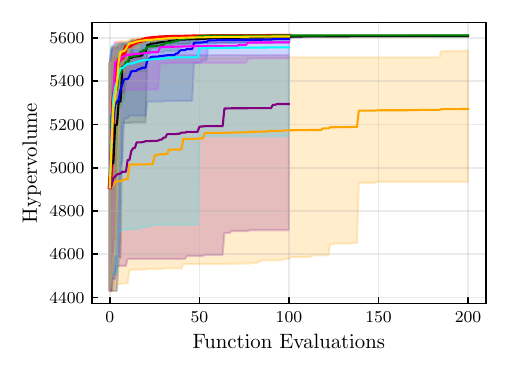}
        \caption*{}
    \end{subfigure}%
    \begin{subfigure}{.25\textwidth}
        \centering
        \includegraphics[width=1\linewidth]{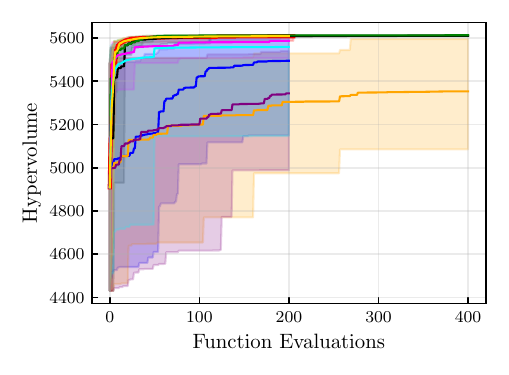}
        \caption*{}
    \end{subfigure}%
    \begin{subfigure}{.25\textwidth}
        \centering
        \includegraphics[width=1\linewidth]{figures/branincurrin_q4_HV_History_Scaled.pdf}
        \caption*{}
    \end{subfigure}%
    \begin{subfigure}{.25\textwidth}
        \centering
        \includegraphics[width=1\linewidth]{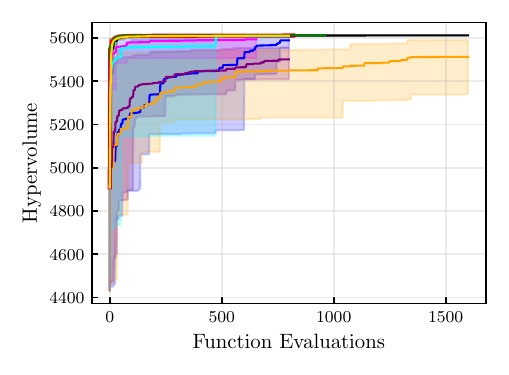}
        \caption*{}
    \end{subfigure}\\
    \begin{subfigure}{.25\textwidth}
        \centering
        \includegraphics[width=1\linewidth]{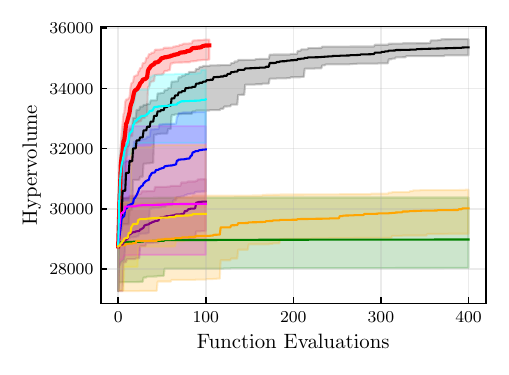}
        \caption*{}
    \end{subfigure}%
    \begin{subfigure}{.25\textwidth}
        \centering
        \includegraphics[width=1\linewidth]{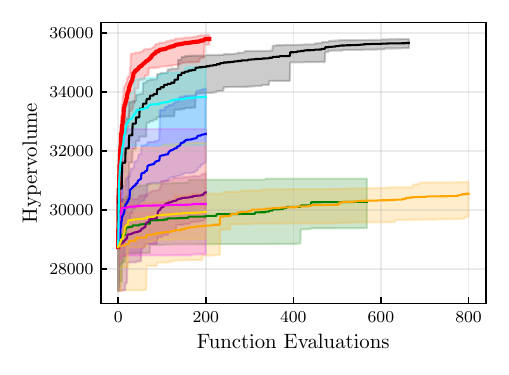}
        \caption*{}
    \end{subfigure}%
    \begin{subfigure}{.25\textwidth}
        \centering
        \includegraphics[width=1\linewidth]{figures/carside_q4_HV_History_Scaled.pdf}
        \caption*{}
    \end{subfigure}%
    \begin{subfigure}{.25\textwidth}
        \centering
        \includegraphics[width=1\linewidth]{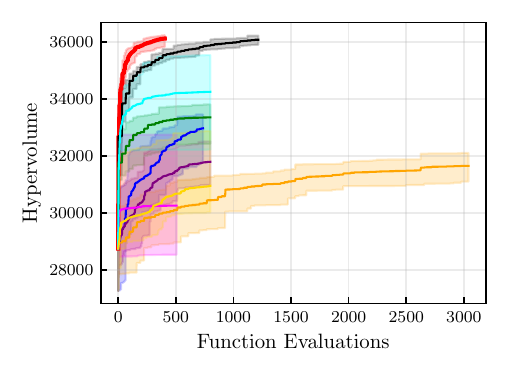}
        \caption*{}
    \end{subfigure}\\
    \begin{subfigure}{.25\textwidth}
        \centering
        \includegraphics[width=1\linewidth]{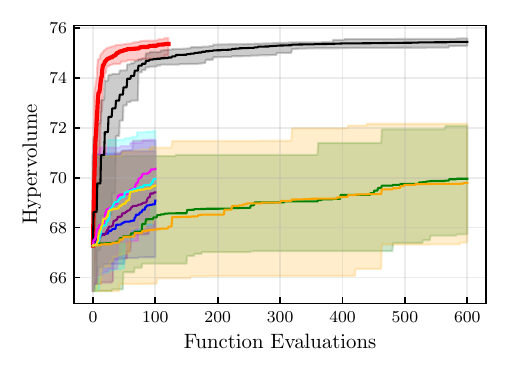}
        \caption*{}
    \end{subfigure}%
    \begin{subfigure}{.25\textwidth}
        \centering
        \includegraphics[width=1\linewidth]{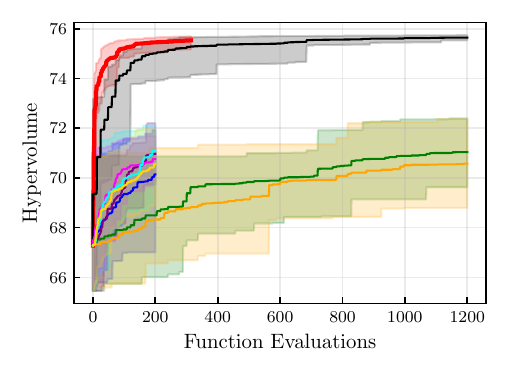}
        \caption*{}
    \end{subfigure}%
    \begin{subfigure}{.25\textwidth}
        \centering
        \includegraphics[width=1\linewidth]{figures/discbrake_q4_HV_History_Scaled.pdf}
        \caption*{}
    \end{subfigure}%
    \begin{subfigure}{.25\textwidth}
        \centering
        \includegraphics[width=1\linewidth]{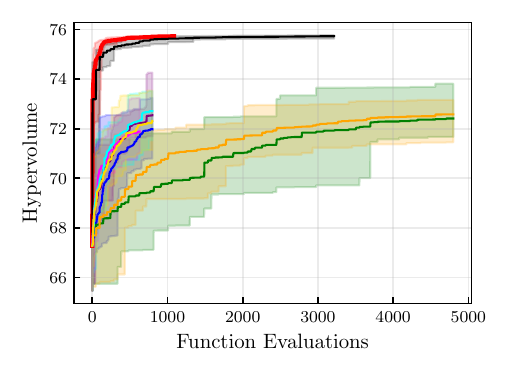}
        \caption*{}
    \end{subfigure}\\
    \begin{subfigure}{.25\textwidth}
        \centering
        \includegraphics[width=1\linewidth]{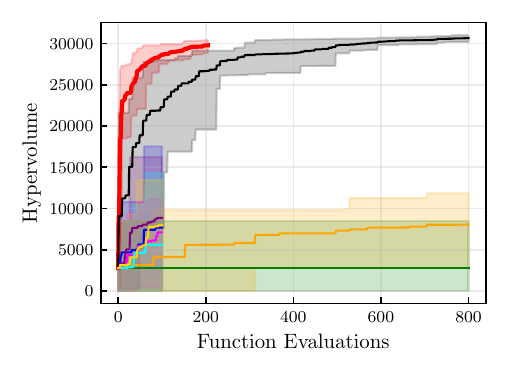}
        \caption{$q=1$}
    \end{subfigure}%
    \begin{subfigure}{.25\textwidth}
        \centering
        \includegraphics[width=1\linewidth]{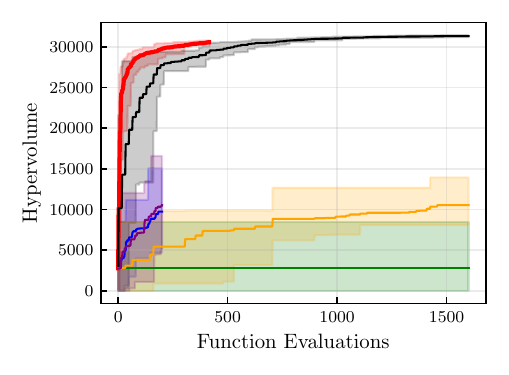}
        \caption{$q=2$}
    \end{subfigure}%
    \begin{subfigure}{.25\textwidth}
        \centering
        \includegraphics[width=1\linewidth]{figures/osy_q4_HV_History_Scaled.pdf}
        \caption{$q=4$}
    \end{subfigure}%
    \begin{subfigure}{.25\textwidth}
        \centering
        \includegraphics[width=1\linewidth]{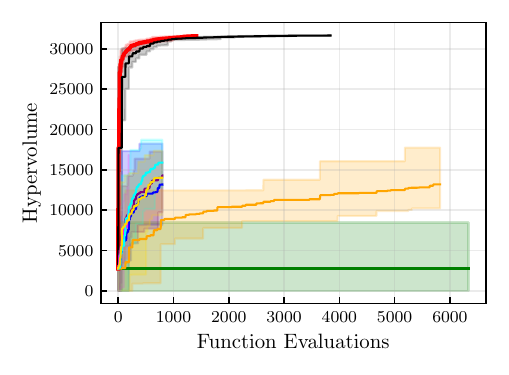}
        \caption{$q=8$}
    \end{subfigure}\\
    \begin{subfigure}{1\textwidth}
        \includegraphics[width=1\linewidth]{figures/legend_two_row.pdf}
    \end{subfigure}
    \label{fig:q_ablations}
    \caption{Ablation on batch size $q$. Row 1: Branin-Currin, row-2: Car side-impact, row-3: Disc brake, row-4: OSY.}
\end{figure*}

\begin{figure}[htbp]
    \centering
    \begin{subfigure}{.5\textwidth}
        \includegraphics[width=1\linewidth]{figures/osy_q1_scatter_evaluations_per_iteration.pdf}
        \caption{$q=1$}
    \end{subfigure}%
    \begin{subfigure}{.5\textwidth}
        \includegraphics[width=1\linewidth]{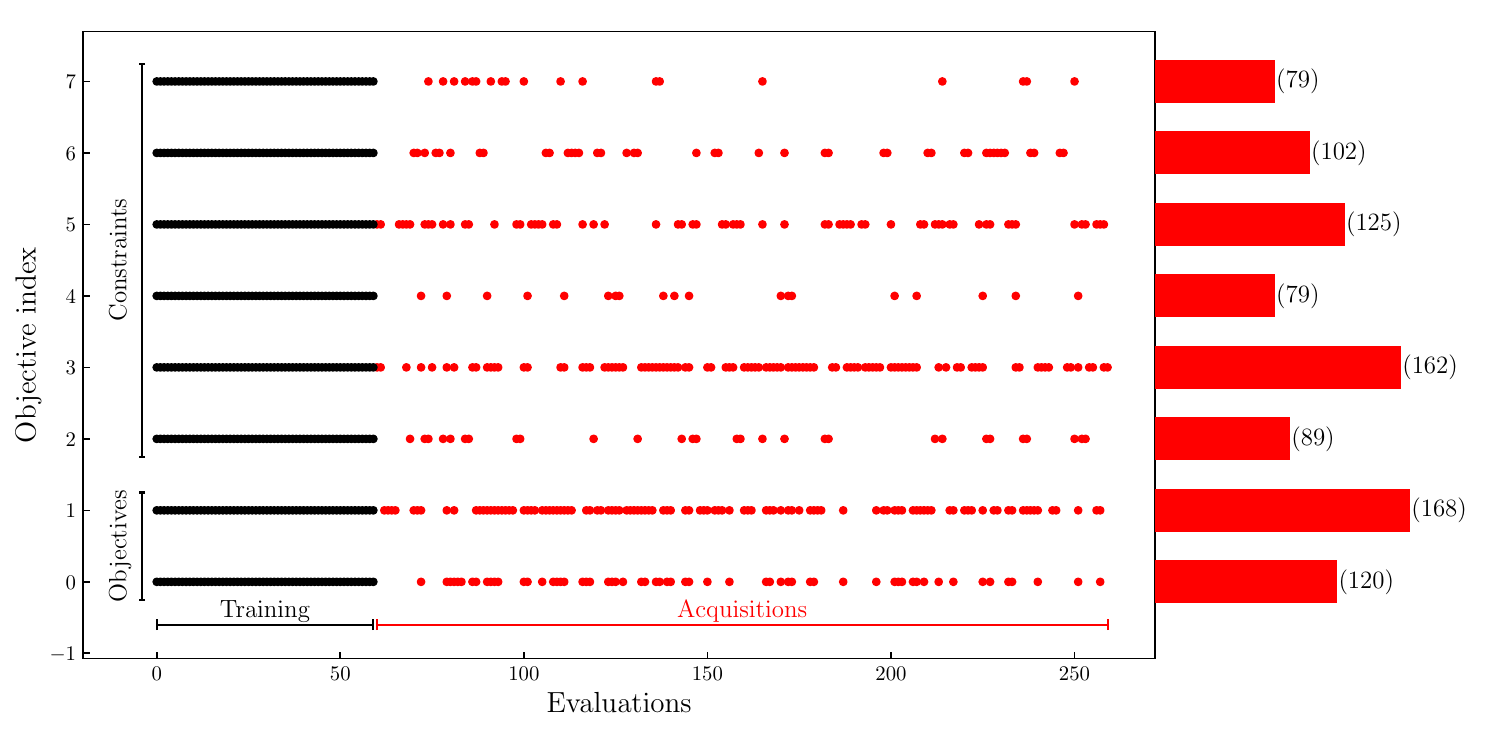}
        \caption{$q=2$}
    \end{subfigure}\\
    \begin{subfigure}{.5\textwidth}
        \includegraphics[width=1\linewidth]{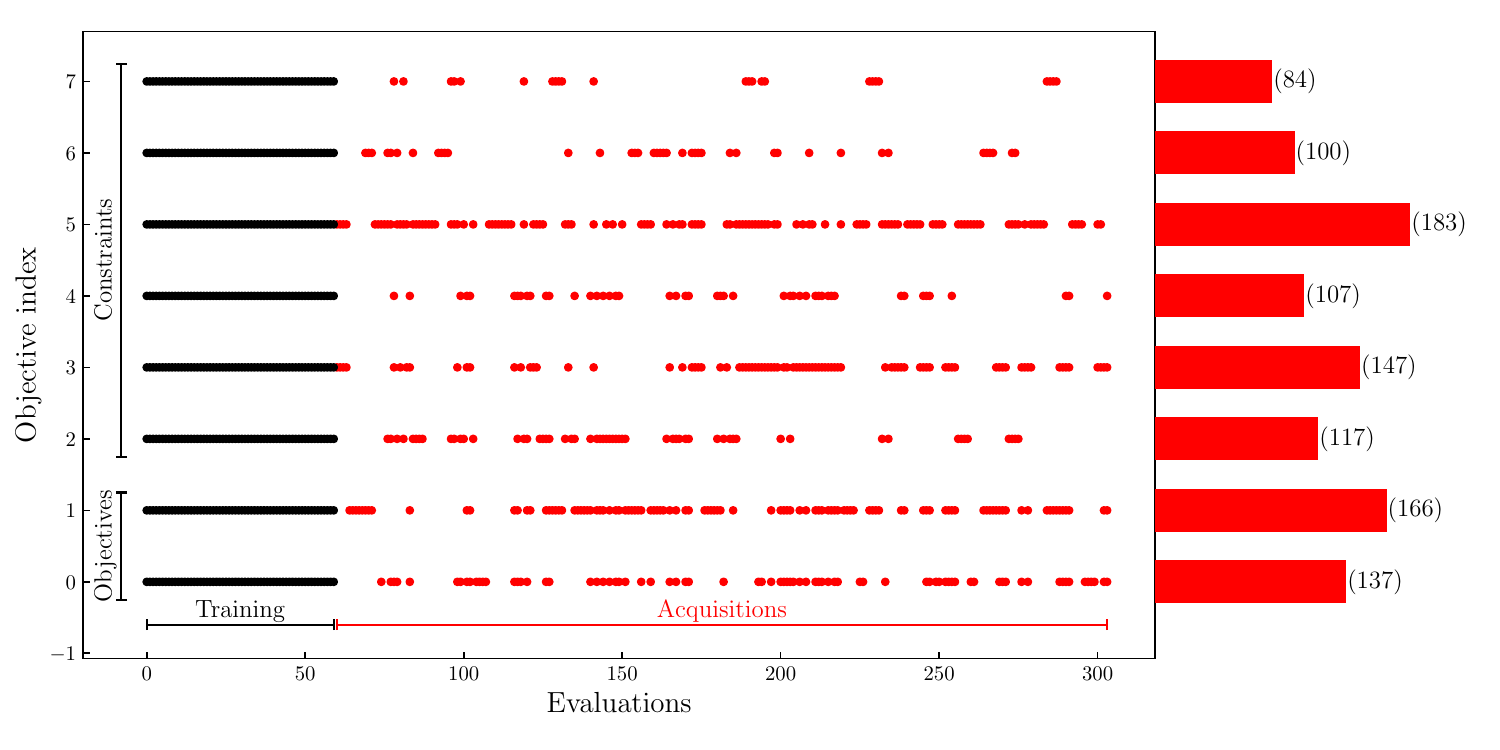}
        \caption{$q=4$}
    \end{subfigure}%
    \begin{subfigure}{.5\textwidth}
        \includegraphics[width=1\linewidth]{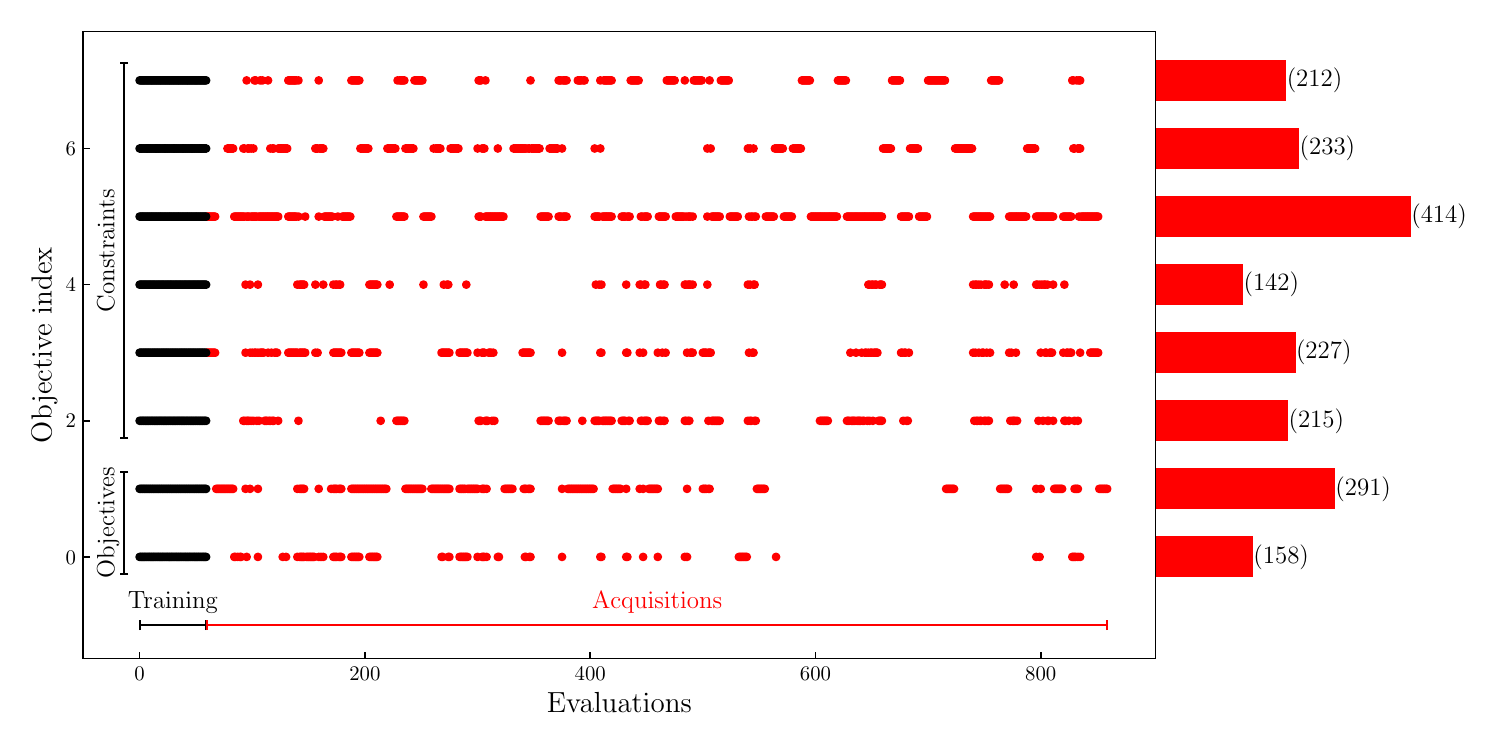}
        \caption{$q=8$}
    \end{subfigure}\\
    \caption{Distribution of decoupled oracle evaluations (OSY)}
    \label{fig:osy_evaluations}
\end{figure}

\begin{figure}[htbp]
    \centering
    \begin{subfigure}{.5\textwidth}
        \includegraphics[width=1\linewidth]{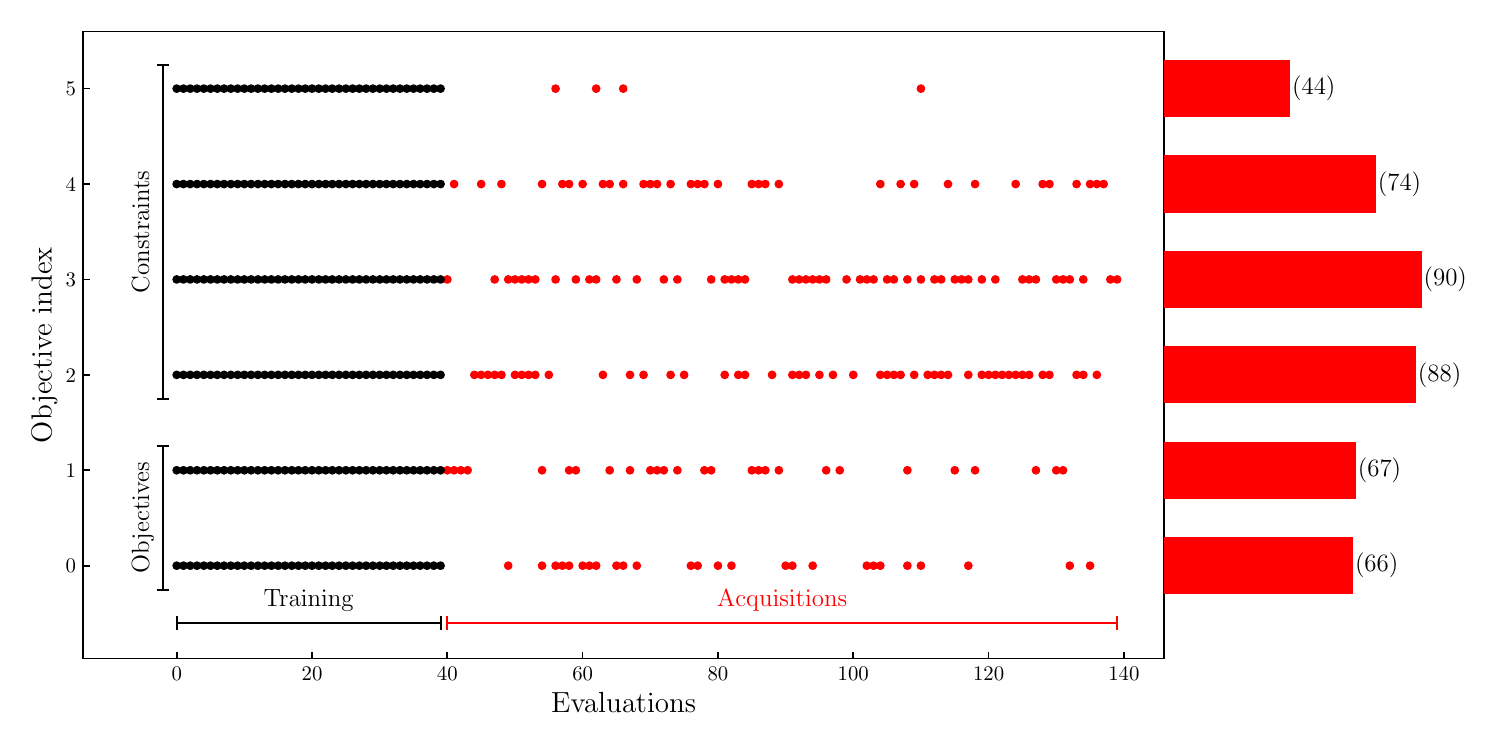}
        \caption{$q=1$}
    \end{subfigure}%
    \begin{subfigure}{.5\textwidth}
        \includegraphics[width=1\linewidth]{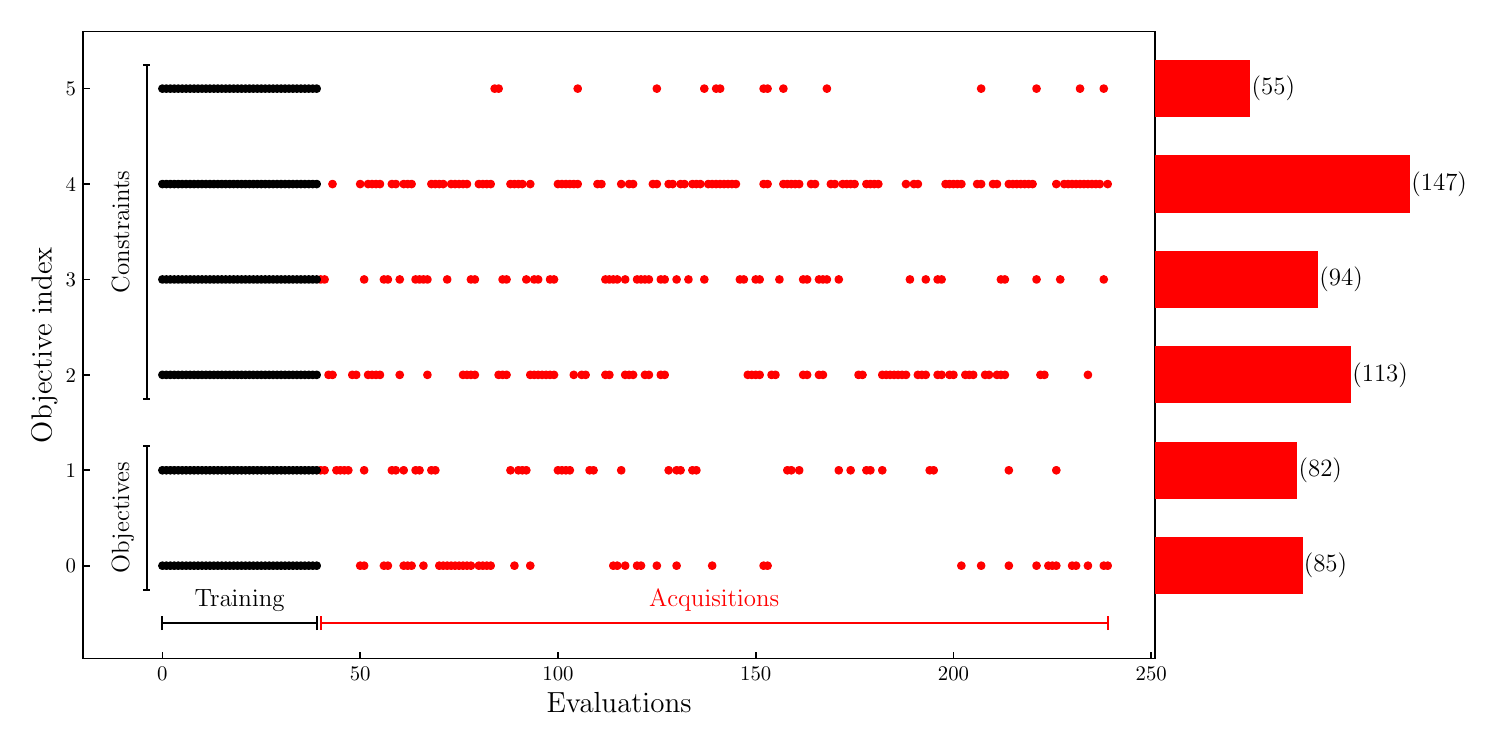}
        \caption{$q=2$}
    \end{subfigure}\\
    \begin{subfigure}{.5\textwidth}
        \includegraphics[width=1\linewidth]{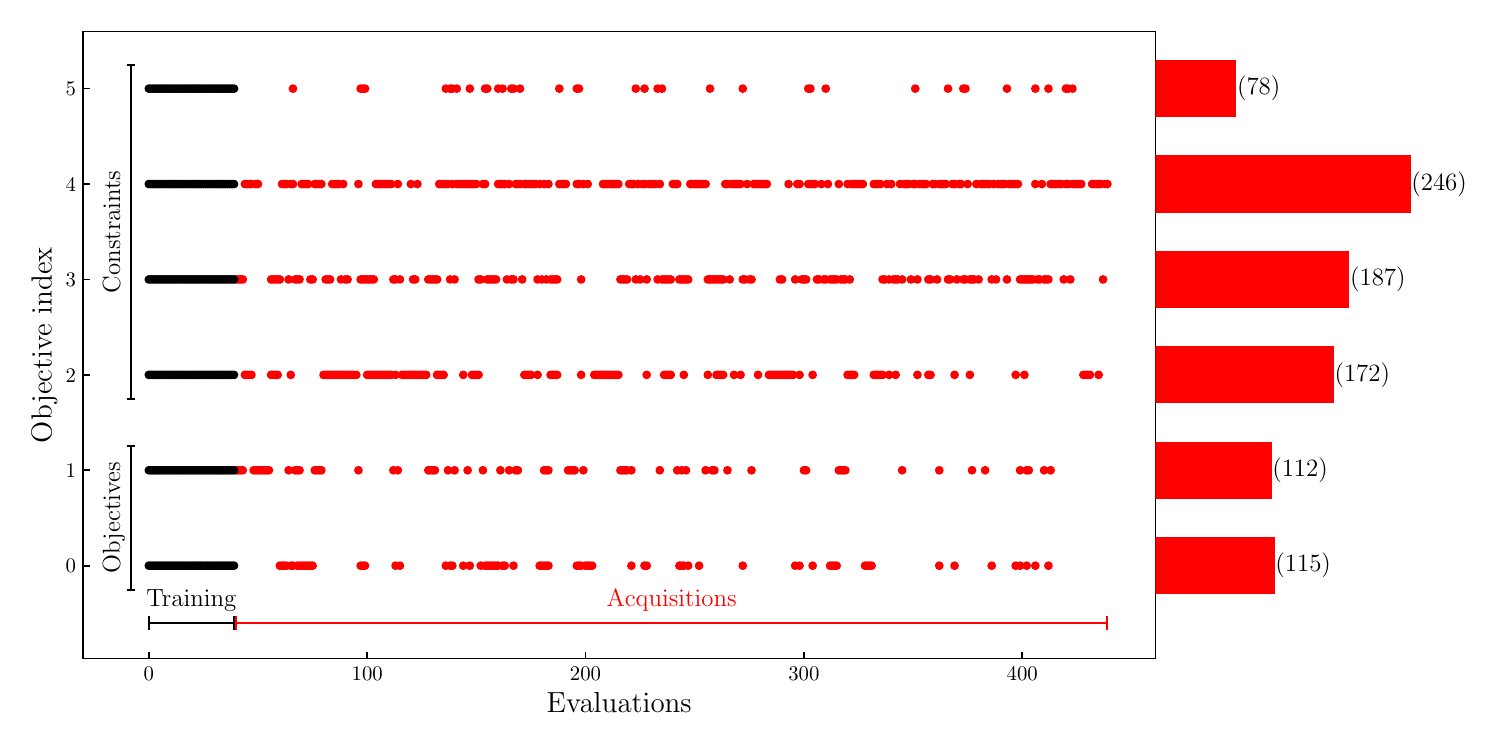}
        \caption{$q=4$}
    \end{subfigure}%
    \begin{subfigure}{.5\textwidth}
        \includegraphics[width=1\linewidth]{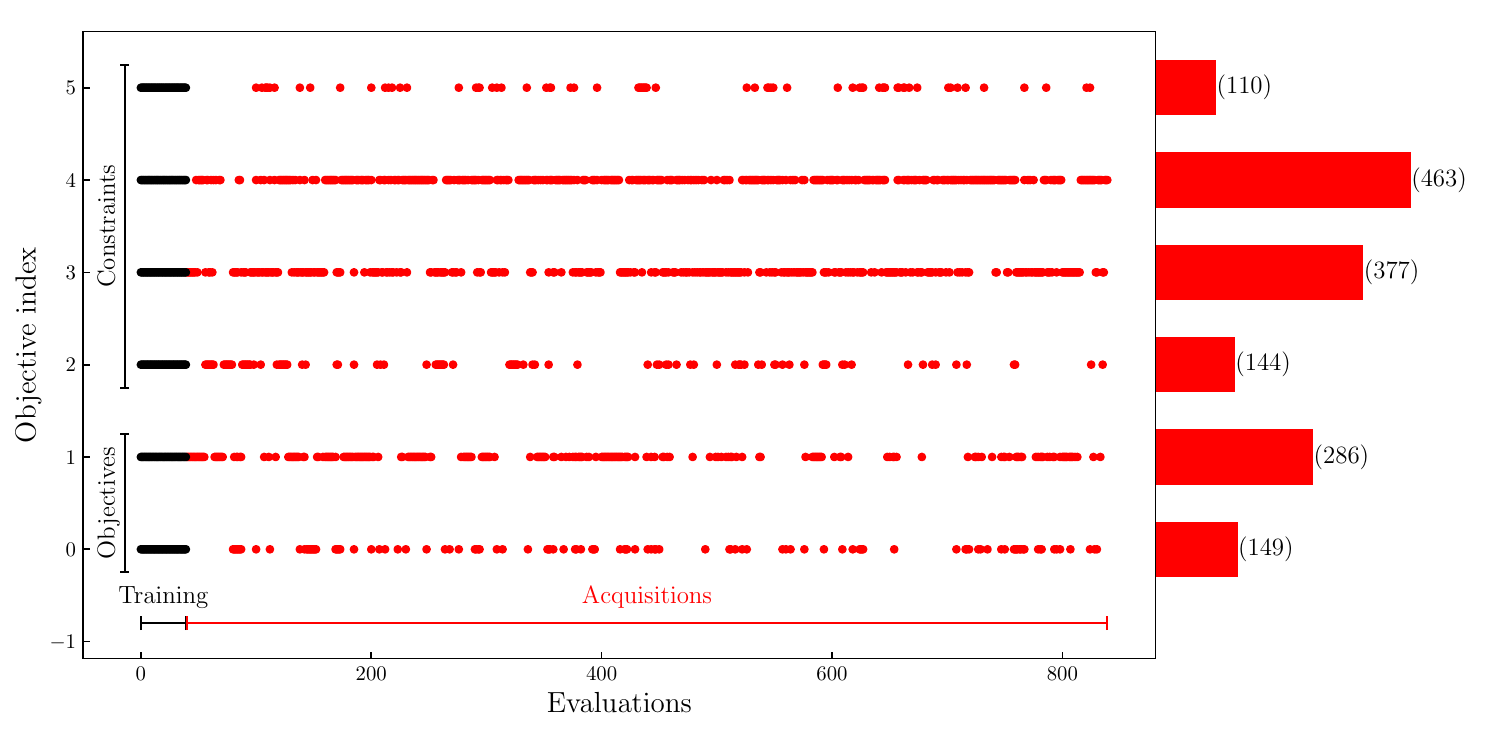}
        \caption{$q=8$}
    \end{subfigure}\\
    \caption{Distribution of decoupled oracle evaluations (Disc brake)}
    \label{fig:discbrake_evaluations}
\end{figure}

\begin{table}[htbp]
\centering
\small
\caption{Summary of all experiments}
\label{tab:test-cases-summary}
\begin{tabularx}{\textwidth}{@{}>{\raggedright\arraybackslash}X r r r >{\raggedright\arraybackslash}X l@{}}
\toprule
\textbf{Test Case} & \textbf{Dim} & \textbf{Obj} & \textbf{Cons} & \textbf{Ref Point} & \textbf{Ablations:} \\
\midrule
BNH & 2 & 2 & 2 & $(-150, -50)$ & -- \\
BraninCurrin & 2 & 2 & 0 & $(-300, -20)$ & $q=1,2,4,8$ \\
C2DTLZ2 & 5 & 4 & 1 & $(-10, -10, -10, -10)$ & -- \\
CarsideImpact & 7 & 4 & 0 & $(-50, -10, -20, -20)$ & $q=1,2,4,8$ \\
CONSTR & 2 & 2 & 2 & $(-10, -60)$ & -- \\
Constrained Branin-Currin & 2 & 2 & 1 & $(-300, -20)$ & -- \\
DiscBrake & 4 & 2 & 4 & $(-10, -10)$ & $q=1,2,4,8$ \\
DTLZ2 & 5 & 4 & 0 & $(-10, -10, -10, -10)$ & -- \\
DTLZ3 & 3 & 2 & 0 & $(-100, -100)$ & -- \\
DTLZ4 & 5 & 4 & 0 & $(-1, -1, -1, -1)$ & -- \\
DTLZ5 & 5 & 4 & 0 & $(-1, -1, -1, -1)$ & -- \\
DTLZ7 & 20 & 2 & 0 & $(-25, -25)$ & -- \\
GMM & 2 & 4 & 0 & $(-1, -1, -1, -1)$ & -- \\
MW7 & 2 & 2 & 2 & $(-10, -10)$ & -- \\
OSY & 6 & 2 & 6 & $(-100, -100)$ & $q=1,2,4,8$ \\
Penicillin & 7 & 3 & 0 & $(-40, -70, -1000)$ & -- \\
SRN & 2 & 2 & 2 & $(-1000, -200)$ & -- \\
VehicleSafety & 5 & 3 & 0 & $(-1900, -20, -10)$ & -- \\
ZDT1 & 5 & 2 & 0 & $(-1, -1)$ & -- \\
ZDT2 & 5 & 2 & 0 & $(-1, -1)$ & -- \\
\bottomrule
\end{tabularx}
\end{table}

\begin{figure}[htbp]
    \centering
    \includegraphics[width=\linewidth]{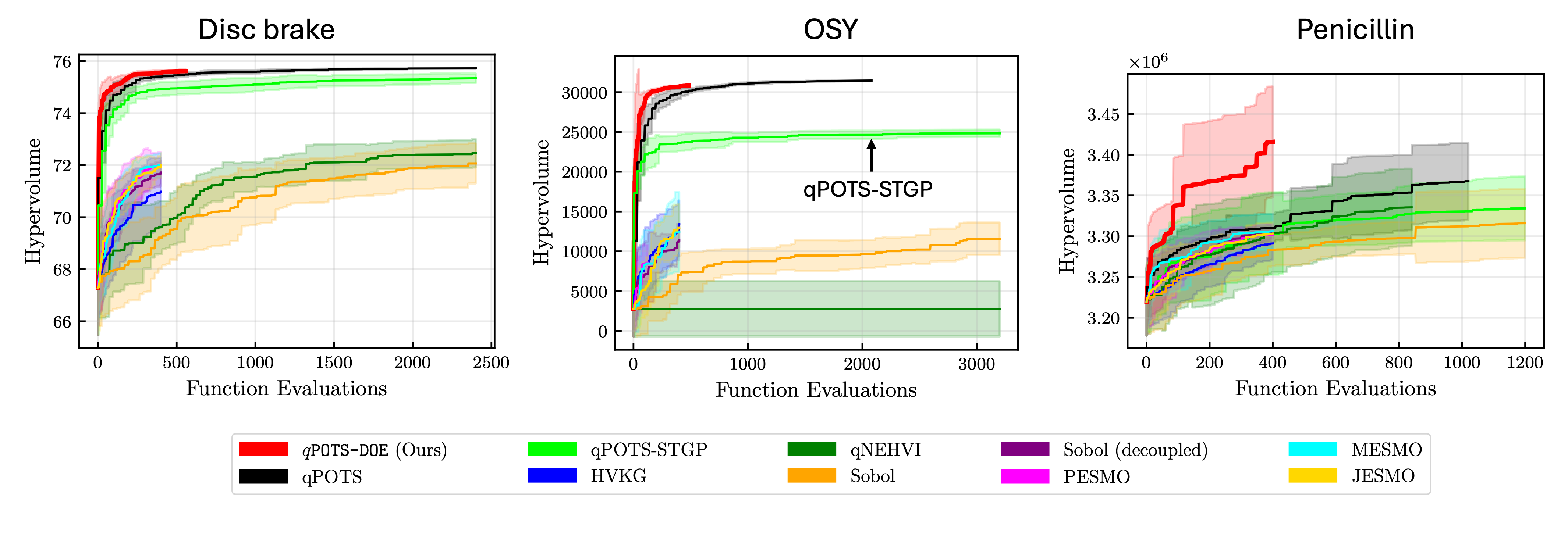}
    \caption{Coupled single-task GP ablation on Disc brake, OSY, and Penicillin.}
    \label{fig:ablation_stgp_coupled}
\end{figure}

\begin{figure}[htbp]
    \centering
    \includegraphics[width=\linewidth]{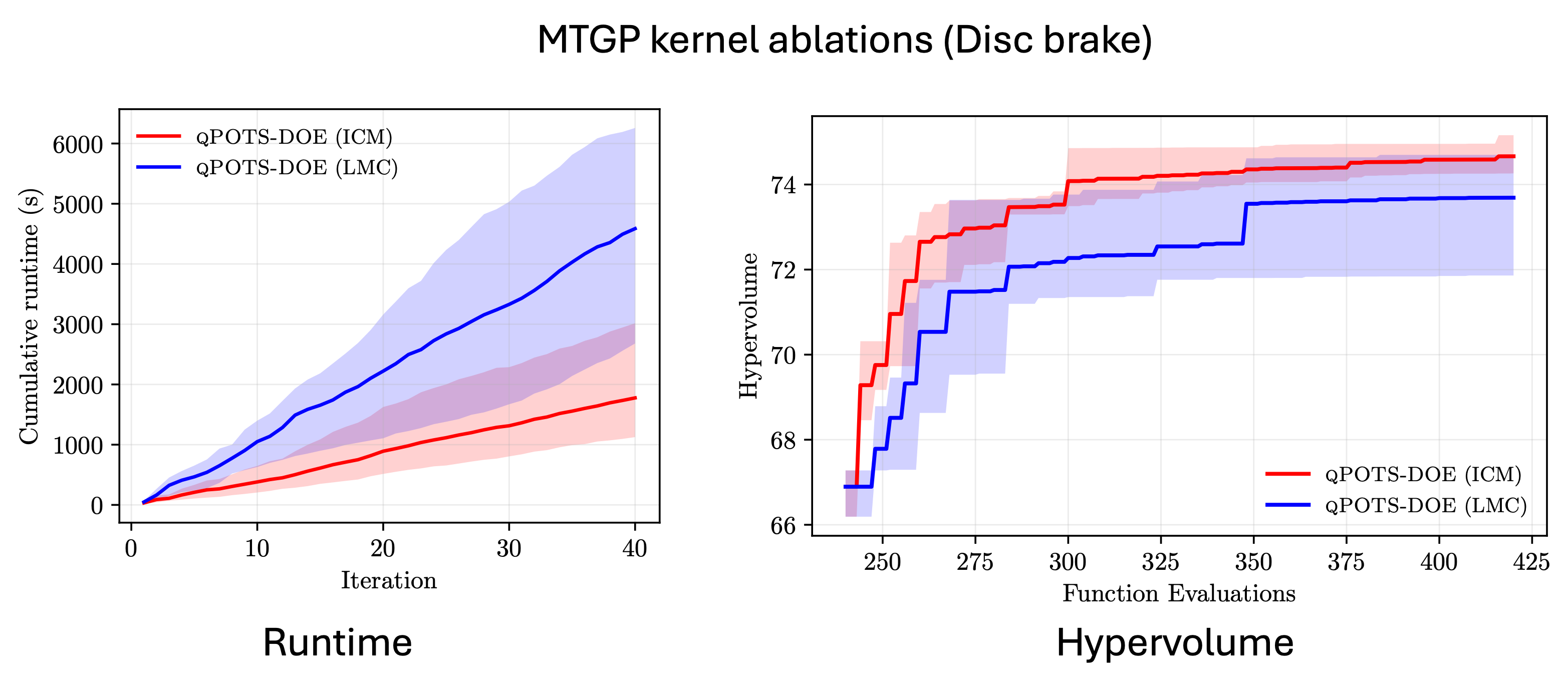}
    \caption{ICM versus LMC: cumulative runtime and hypervolume.}
    \label{fig:ablation_lmc_icm}
\end{figure}

\begin{figure}[htbp]
    \centering
    \includegraphics[width=\linewidth]{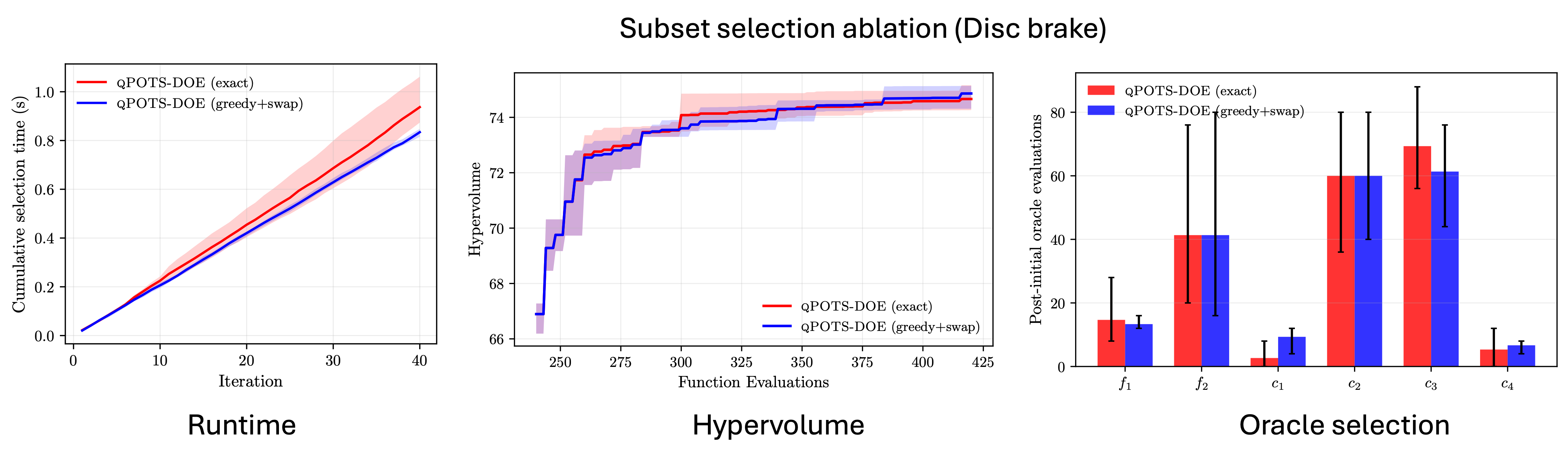}
    \caption{Exact versus greedy-plus-swap subset selection: cumulative selection time, hypervolume, and oracle evaluations.}
    \label{fig:ablation_subset_selection}
\end{figure}

\begin{figure}[htbp]
    \centering
    \includegraphics[width=\linewidth]{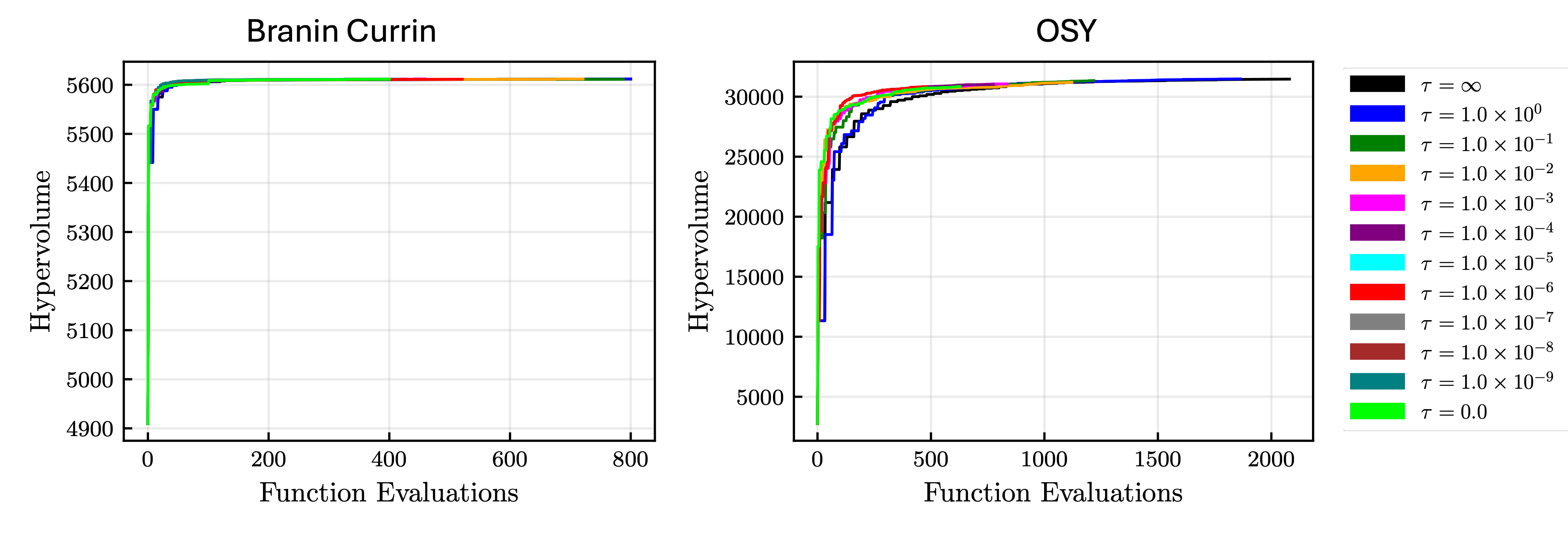}
    \caption{Ablation on the decoupling threshold $\tau$ for Branin-Currin and OSY.}
    \label{fig:ablation_tau}
\end{figure}

\begin{figure}[htbp]
    \centering
    \includegraphics[width=\linewidth]{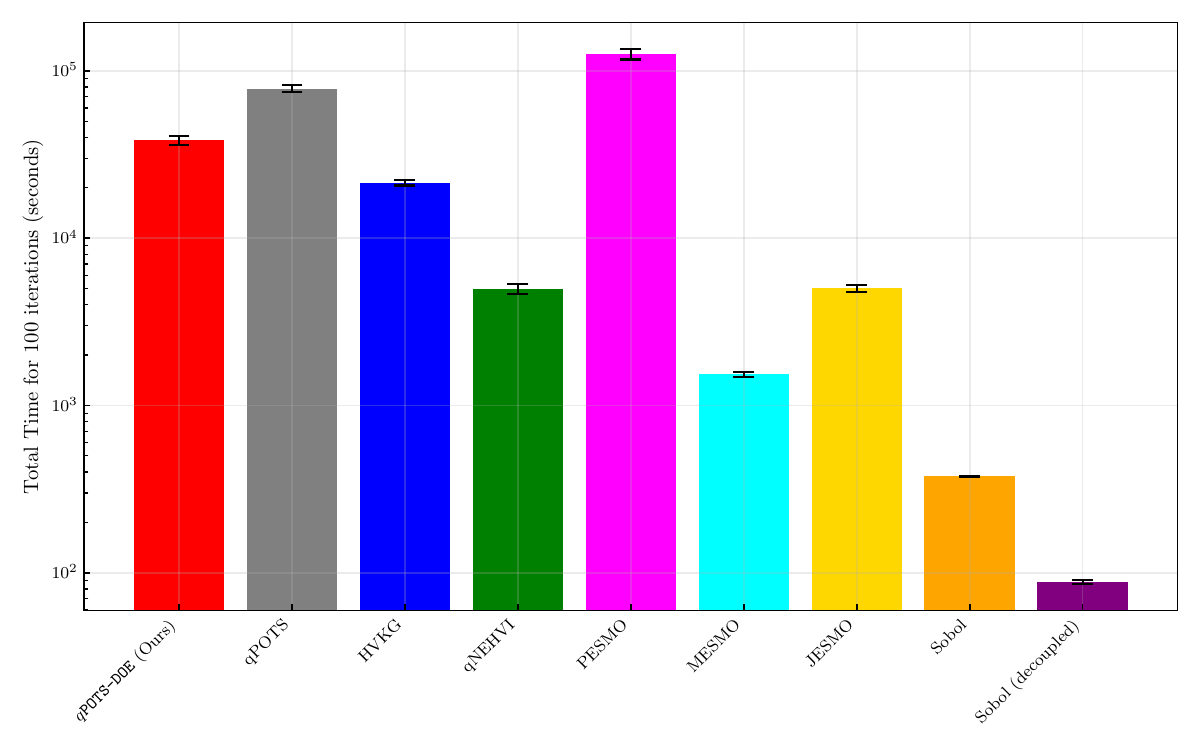}
    \caption{Runtime comparison on OSY.}
    \label{fig:ablation_osy_runtime}
\end{figure}

\clearpage

\subsection{Additional detail on the method}
\paragraph{Gaussian process models.}
\label{sec:gp_intro}
Let $f: \mathcal{X} \rightarrow \mbb{R}$ be an expensive zeroth-order oracle, and the input $\x$  be a $d$-dimensional vector $\mathbf{x}\in \mcl{X} \subset \mathbb{R}^d$, and let $Y(\mathbf{x})$ denote a real-valued random function evaluated at $\mathbf{x}$. A GP is a stochastic process such that any finite collection of function values is jointly Gaussian \cite{rasmussen2006gaussian,williams1998prediction}. We write the GP prior as
\begin{equation*}
  Y(\mathbf{x}) \sim \mathcal{GP}\!\big(m(\mathbf{x}),\, k(\mathbf{x},\mathbf{x}')\big),
\end{equation*}
where $m(\mathbf{x})=\mathbb{E}[Y(\mathbf{x})]$ is the mean function and $k(\mathbf{x},\mathbf{x}')=\mathrm{cov}(Y(\mathbf{x}),Y(\mathbf{x}'))$ is a positive semidefinite covariance (kernel) function.
Given a set of inputs $\mathbf{X} = [\mathbf{x}_1,\ldots,\mathbf{x}_n]^\top \in \mathbb{R}^{n\times d}$, define the vector of latent function values $\mathbf{y} \;=\; \big[Y(\mathbf{x}_1),\,\ldots,\,Y(\mathbf{x}_n)\big]^\top \in \mathbb{R}^n.$
By the definition of a GP, $\mathbf{y}$ is multivariate Gaussian: $ \mathbf{y} \sim \mathcal{N}(\mathbf{m}, \Sigma),
  \quad
  \mathbf{m}_i = m(\mathbf{x}_i),
  \quad
  \Sigma_{ij} = k(\mathbf{x}_i,\mathbf{x}_j).$
Let there be noisy observations of $f$ of the form $ \tilde{y}_i = Y(\mathbf{x}_i) + \varepsilon_i,
  \qquad
  \varepsilon_i \overset{\text{i.i.d.}}{\sim} \mathcal{N}(0,\sigma^2),$
so that the observation vector $\tilde{\mathbf{y}}=[\tilde{y}_1,\ldots,\tilde{y}_n]^\top$ satisfies $  \tilde{\mathbf{y}} \sim \mathcal{N}\big(\mathbf{m},\, \Sigma + \sigma^2 \mathbf{I}_n\big).$
Then, for a test input $\mathbf{x}_*\in\mathbb{R}^d$, define the cross-covariances $\Sigma_{X*} = \big[k(\mathbf{x}_i,\mathbf{x}_*)\big]_{i=1}^n \in \mathbb{R}^{n\times 1},
  \quad
  \Sigma_{*X} = \Sigma_{X*}^\top,
  \quad
  \Sigma_{**} = k(\mathbf{x}_*,\mathbf{x}_*) \in \mathbb{R}.$
Conditioning the joint Gaussian of $(\tilde{\mathbf{y}},\,Y(\mathbf{x}_*))$ yields the standard GP regression posterior \cite{rasmussen2006gaussian}:
\begin{align*}
  Y(\mathbf{x}_*) \mid \mathbf{X}, \tilde{\mathbf{y}}
  \;\sim\;
  \mathcal{N}\!\Big(
    &m(\mathbf{x}_*) + \Sigma_{*X}\big(\Sigma + \sigma^2\mathbf{I}_n\big)^{-1}\big(\tilde{\mathbf{y}}-\mathbf{m}\big),
    \nonumber
    \Sigma_{**} - \Sigma_{*X}\big(\Sigma + \sigma^2\mathbf{I}_n\big)^{-1}\Sigma_{X*}
  \Big).
  \label{eq:gp_posterior}
\end{align*}
This posterior provides both a predictive mean and a calibrated uncertainty (predictive variance) at $\mathbf{x}_*$.

\subsubsection{Unconstrained \qpotsdoe~with MTGPs}
\label{sec:unconstrained_qpots}

Let
$
\mathbf{Y}_{1:K}(\x)
\equiv
\begin{bmatrix}
Y_1(\x) & \cdots & Y_K(\x)
\end{bmatrix}^{\top}.
$
Then, the multitask posterior is written as
\[
\mathbf{Y}_{1:K}(\cdot)\mid D_n^{1:K},\Omega
\sim
\mathrm{MTGP}\!\left(
\boldsymbol{\mu}_n(\cdot),
\boldsymbol{\Sigma}_n(\cdot,\cdot)
\right),
\]
where
\[
\left[\boldsymbol{\Sigma}_n(\x,\x')\right]_{ij}
=
\operatorname{cov}
\left(
Y_i(\x),Y_j(\x') \mid D_n^{1:K},\Omega
\right),
\qquad i,j\in[K].
\]
Thus, the posterior over the $K$ objective sample paths is no longer
factorized across objectives. We choose $\x$ according to
\begin{equation}
    p_{X^\ast}(\x)
=
\int
\delta
\left(
\x-
\operatorname*{arg\,max}_{\x\in\mathcal{X}}
\{Y_1(\x),\ldots,Y_K(\x)\}
\right)
p\!\left(\mathbf{Y}_{1:K}\mid D_n^{1:K}\right)
\,d\mathbf{Y}_{1:K},
\label{eqn:qpots}
\end{equation}
where $d\mathbf{Y}_{1:K}
\equiv
dY_1\cdots dY_K$.
If $\boldsymbol{\Sigma}_n(\cdot,\cdot)$ is block diagonal across the
$K$ tasks, then \eqref{eqn:qpots} reduces to the independent-GP factorization as in \cite{renganathan2025qpots}.
In practice, this is equivalent to drawing one joint sample path from the
multitask posterior,
\[
\mathbf{Y}_{1:K}(\cdot,\omega)
=
\boldsymbol{\mu}_n(\cdot)
+
\boldsymbol{\Sigma}_n^{1/2}(\cdot)\,Z(\omega),
\qquad
Z(\omega)\sim\mathcal{N}(0,I),
\]
and choosing the next point(s) from the Pareto set of the jointly sampled
objective paths:
\[
X^\ast
=
\operatorname*{arg\,max}_{\x\in\mathcal{X}}
\{Y_1(\x,\omega),\ldots,Y_K(\x,\omega)\}.
\]
As in \cite{renganathan2025qpots}, this cheap multiobjective optimization problem is solved via evolutionary approaches e.g., NSGA-II. 

Let
$\gamma(\cdot,\cdot):\mathbb{R}^d\times\mathbb{R}^d\rightarrow
\mathbb{R}_{+}$ denote the Euclidean distance between two points in
$\mathcal{X}$. Then, the next batch of $q$ points is chosen per
\begin{equation}
    \begin{aligned}
\x_{n+1}
&=
\operatorname*{arg\,max}_{\x^\ast\in \X^\ast}
\min_{\x_i\in X_n}
\gamma(\x^\ast,\x_i), \\
\x_{n+2}
&=
\operatorname*{arg\,max}_{\x^\ast\in X^\ast}
\min_{\x_i\in X_n\cup\{\x_{n+1}\}}
\gamma(\x^\ast,\x_i), \\
&\hspace{0.25in}\vdots \\
\x_{n+q}
&=
\operatorname*{arg\,max}_{\x^\ast\in X^\ast}
\min_{\x_i\in X_n\cup\{\x_{n+1},\ldots,\x_{n+q-1}\}}
\gamma(\x^\ast,\x_i).
\end{aligned}
\label{eqn:maximin}
\end{equation}
\Cref{eqn:maximin} is unchanged from the independent-GP setting; only the
posterior sample path used to construct $X^\ast$ is now drawn jointly
from the multitask GP posterior.

\subsection{Proofs}
\label{sec:proofs}

\begin{assumption}[Regularity and no-starvation]
\label{ass:regularity-no-starvation}
Let $\mathcal X \subset \mathbb R^d$ be compact with nonempty interior, and let
\[
\mathbf Y(\mathbf x)
=
\big(Y_1(\mathbf x),\ldots,Y_K(\mathbf x)\big)^\top
\]
be a correctly specified $K$-task Gaussian process prior with mean
$\boldsymbol\mu_0(\mathbf x)$ and matrix-valued covariance kernel
\[
\mathbf K_0(\mathbf x,\mathbf x')
=
\operatorname{Cov}\!\left(
\mathbf Y(\mathbf x),\mathbf Y(\mathbf x')
\right)
\in \mathbb R^{K\times K}.
\]
Assume that the process admits an almost surely continuous version on
$\mathcal X$.
At round $i$, the algorithm queries a design point $\mathbf x_i\in\mathcal X$
and observes only a subset of tasks $S_i\subseteq [K]$. For each
$k\in S_i$, the observation model is
\[
y_{i,k}
=
Y_k(\mathbf x_i)+\varepsilon_{i,k},
\qquad
\varepsilon_{i,k}\sim \mathcal N(0,\sigma_k^2),
\]
where the noise variables are independent across rounds and tasks, with
$0<\sigma_k^2<\infty$. Let
\[
\mathcal D_n
=
\{(\mathbf x_i,S_i,\{y_{i,k}:k\in S_i\})\}_{i=1}^n
\]
denote the partial-observation data after $n$ rounds.

Finally, assume no starvation under decoupling: for every task
$k\in[K]$, the set of design points at which task $k$ is directly observed,
\[
\mathcal X_{n,k}
:=
\{\mathbf x_i : i\le n,\ k\in S_i\},
\]
has fill distance
\[
h_{n,k}
:=
\sup_{\mathbf x\in\mathcal X}
\min_{\mathbf z\in\mathcal X_{n,k}}
\|\mathbf x-\mathbf z\|_2
\]
satisfying
\[
h_{n,k}\longrightarrow 0
\qquad\text{as } n\to\infty,
\qquad
\forall k\in[K].
\]
Note that the no-starvation assumption is only necessary to prove \Cref{thm:qpots-doe-consistency}; but, in practice, it may not be necessary. Indeed, all experiments reported in this paper don't perform line $16$ of \Cref{alg:qpots-doe} (occasional evaluations of all objectives and constraints).
\end{assumption}

\begin{theorem}[Posterior consistency under decoupled oracle evaluations]
\label{thm:qpots-doe-consistency}
Let $\mathcal X \subset \mathbb R^d$ be compact with nonempty interior, and let
\[
\mathbf Y(\mathbf x)
=
\big(Y_1(\mathbf x),\ldots,Y_K(\mathbf x)\big)^\top
\]
denote a $K$-task Gaussian process prior with mean
$\boldsymbol\mu_0:\mathcal X\to\mathbb R^K$ and matrix-valued covariance kernel
\[
\mathbf K_0(\mathbf x,\mathbf x')
=
\operatorname{Cov}\!\left(
\mathbf Y(\mathbf x),\mathbf Y(\mathbf x')
\right)
\in \mathbb R^{K\times K}.
\]
Assume that the latent objective vector
\[
\mathbf Y_0(\mathbf x)
=
\big(Y_{0,1}(\mathbf x),\ldots,Y_{0,K}(\mathbf x)\big)^\top
\]
is generated from this correctly specified $K$-task GP model, and that the
process admits an almost surely continuous version on $\mathcal X$.

At round $i$, the algorithm selects a design point $\mathbf x_i\in\mathcal X$
and evaluates only a subset of objectives
\[
S_i \subseteq [K],
\qquad
\bar S_i := [K]\setminus S_i .
\]
For every $k\in S_i$, the observation model is
\[
y_{i,k}
=
Y_{0,k}(\mathbf x_i)+\varepsilon_{i,k},
\qquad
\varepsilon_{i,k}\sim \mathcal N(0,\sigma_k^2),
\]
where the noise variables are independent across rounds and objectives, with
$0<\sigma_k^2<\infty$. No observation is collected for objectives
$k\in\bar S_i$. Let
\[
\mathcal D_n
=
\left\{
\left(\mathbf x_i,S_i,\{y_{i,k}:k\in S_i\}\right)
\right\}_{i=1}^n
\]
denote the partial-observation data after $n$ rounds, and write the resulting
multitask GP posterior as
\[
\mathbf Y \mid \mathcal D_n
\sim
\mathcal{GP}\!\left(
\boldsymbol\mu_n(\cdot),
\mathbf K_n(\cdot,\cdot)
\right).
\]

Assume no starvation under decoupling: for every objective $k\in[K]$, the set
of design points at which objective $k$ is directly evaluated,
\[
\mathcal X_{n,k}
:=
\{\mathbf x_i : i\le n,\ k\in S_i\},
\]
has fill distance
\[
h_{n,k}
:=
\sup_{\mathbf x\in\mathcal X}
\min_{\mathbf z\in\mathcal X_{n,k}}
\|\mathbf x-\mathbf z\|_2
\]
satisfying
\[
h_{n,k}\longrightarrow 0
\qquad
\text{as } n\to\infty,
\qquad
\forall k\in[K].
\]

Then, for any finite set
\[
\mathcal X^\star
=
\{\mathbf x^{(1)},\ldots,\mathbf x^{(m)}\}
\subset \mathcal X
\]
and any $\epsilon>0$,
\[
\Pi\!\left(
\left\|
\mathbf Y(\mathcal X^\star)
-
\mathbf Y_0(\mathcal X^\star)
\right\|_2
>
\epsilon
\;\middle|\;
\mathcal D_n
\right)
\longrightarrow 0
\qquad
\text{almost surely under the true data-generating law}.
\]
Equivalently,
\[
\boldsymbol\mu_n(\mathcal X^\star)
\longrightarrow
\mathbf Y_0(\mathcal X^\star),
\qquad
\mathbf K_n(\mathcal X^\star,\mathcal X^\star)
\longrightarrow
\mathbf 0
\]
almost surely. In particular, for every fixed $\mathbf x\in\mathcal X$,
\[
\boldsymbol\mu_n(\mathbf x)
\longrightarrow
\mathbf Y_0(\mathbf x),
\qquad
\mathbf K_n(\mathbf x,\mathbf x)
\longrightarrow
\mathbf 0_{K\times K}.
\]
\end{theorem}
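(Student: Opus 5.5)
The plan is to reduce everything to a Lévy upward (martingale convergence) argument, applied one task and one design point at a time. Let $\mathcal F_n=\sigma(\mathcal D_n)$ and $\mathcal F_\infty=\sigma(\bigcup_n\mathcal F_n)$. Under correct specification, the posterior mean is the conditional expectation $\mu_{n,k}(\mathbf x)=\mathbb E[Y_k(\mathbf x)\mid\mathcal F_n]$, and the posterior variance is $\mathbb E[Y_k(\mathbf x)^2\mid\mathcal F_n]-\mu_{n,k}(\mathbf x)^2$. This holds even though $\mathbf x_i$ and $S_i$ are chosen adaptively: they are $\mathcal F_{i-1}$-measurable (plus independent algorithmic randomness), so the likelihood factorizes and the Gaussian conditioning formula \eqref{eq:mtgp_posterior} still computes the conditional law. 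A continuous Gaussian process on a compact set has $\mathbb E\sup_{\mathcal X}|Y_k|^2<\infty$, so both conditional moments are $L^1$-bounded martingales. Lévy's theorem then gives, almost surely,
\[
\mu_{n,k}(\mathbf x)\to \mathbb E[Y_k(\mathbf x)\mid\mathcal F_\infty],
\qquad
\mathbb E[Y_k(\mathbf x)^2\mid\mathcal F_n]\to\mathbb E[Y_k(\mathbf x)^2\mid\mathcal F_\infty].
\]
Hence the whole theorem follows once we show that $Y_k(\mathbf x)$ is $\mathcal F_\infty$-measurable (mod null sets) for every $k$ and every fixed $\mathbf x$. In that case both limits are $Y_{0,k}(\mathbf x)$ and $Y_{0,k}(\mathbf x)^2$, so $\mu_{n,k}(\mathbf x)\to Y_{0,k}(\mathbf x)$ and $[\mathbf K_n(\mathbf x,\mathbf x)]_{kk}\to0$.

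The measurability step uses only the no-starvation assumption for task $k$; cross-task correlations are not needed, since extra data can only help. Fix $\mathbf x$ and $r>0$. Because $\mathcal X$ has nonempty interior (I will assume, as is implicit, that it is the closure of its interior, so $B(\mathbf x,r)\cap\mathcal X$ contains $N$ disjoint small balls for any $N$), $h_{n,k}\to0$ forces each of those balls eventually to contain a point of $\mathcal X_{n,k}$. Thus the number $N_n(r)$ of task-$k$ observations within distance $r$ of $\mathbf x$ tends to infinity. Choose $r_j\downarrow0$ and form the running averages $\bar y_{n}(r_j)$ of the task-$k$ observations in $B(\mathbf x,r_j)$. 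Then
\[
\bar y_n(r_j)-Y_k(\mathbf x)=\Big(\text{average of }Y_k(\mathbf x_i)-Y_k(\mathbf x)\Big)+\Big(\text{average of }\varepsilon_{i,k}\Big).
\]
The first term is bounded by the modulus of continuity $\omega_k(r_j)$ of the continuous path. The second is an average of martingale differences with bounded variance $\sigma_k^2$; the selection of which rounds fall in the ball is predictable, so a martingale strong law gives almost sure convergence to $0$ as $N_n(r_j)\to\infty$. Taking $n\to\infty$ and then $j\to\infty$ along a diagonal exhibits $Y_k(\mathbf x)$ as an almost sure limit of $\mathcal F_\infty$-measurable variables.

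Passing to a finite set $\mathcal X^\star$ and to the full matrix is routine. Diagonal entries vanish for every point and task. Off-diagonal entries of $\mathbf K_n(\mathcal X^\star,\mathcal X^\star)$ vanish by Cauchy–Schwarz on conditional covariances. The posterior probability statement then follows from the conditional Chebyshev inequality applied to $\|\mathbf Y(\mathcal X^\star)-\boldsymbol\mu_n(\mathcal X^\star)\|_2$ together with $\boldsymbol\mu_n\to\mathbf Y_0$ on the finite set; the intersection of finitely many almost sure events is still almost sure.

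I expect the main obstacle to be the noise-averaging step under adaptivity. The design points, the subsets $S_i$, and hence which noise terms enter each local average all depend on past observations. So I must verify carefully that $\varepsilon_{i,k}$ is independent of $\mathcal F_{i-1}$ and of the decision to query task $k$ at $\mathbf x_i$, so that the selected noises form a martingale difference sequence. I also need to check that $\mathbf x_i$ is genuinely a predictable selection, because Thompson sampling injects extra randomness; I would enlarge the filtration to include it. The geometric claim that fill distance tending to zero yields infinitely many observations near every $\mathbf x$, including boundary points, also needs the mild regularity of $\mathcal X$ noted above.
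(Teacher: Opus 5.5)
Your proposal is correct. It shares the paper's ingredients (local averages of task-$k$ observations, martingale convergence, Cauchy--Schwarz for off-diagonal entries, conditional Chebyshev), but the key step runs differently.

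\textbf{The paper's route.} It never shows directly that $Y_k(\mathbf x)$ is $\mathcal F_\infty$-measurable. It bounds the mean-squared error of the local average by $\omega_k(r_n)^2+\sigma_k^2/N_{n,k}(\mathbf x)$, where $\omega_k$ is the \emph{mean-square} modulus of continuity coming from the kernel. It then invokes the MMSE property to bound the scalar task-$k$ posterior variance by this quantity. Next it transfers the bound to the multitask posterior by noting that conditioning on more Gaussian data cannot increase variance. Only at the end does it use martingale convergence and uniqueness of $L^2$ limits to upgrade $L^2$ convergence of $\boldsymbol\mu_n$ to almost sure convergence.

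\textbf{Your route.} You work pathwise. The sample-path modulus of continuity, together with a martingale strong law for the predictably selected noises, exhibits $Y_k(\mathbf x)$ as an almost sure limit of $\mathcal F_\infty$-measurable averages. L\'evy's upward theorem, applied to the first and second conditional moments, then gives $\mu_{n,k}(\mathbf x)\to Y_{0,k}(\mathbf x)$ and $[\mathbf K_n(\mathbf x,\mathbf x)]_{kk}\to 0$ simultaneously. No variance-comparison step between the task-$k$ data and the full data is needed.

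\textbf{What each buys.} The paper's argument gives an explicit bound, and hence a rate, for the posterior variance, and it needs only mean-square continuity. However, its MMSE inequality compares a conditional variance with an unconditional risk. That comparison is pathwise only when the design points, the subsets $S_i$, and therefore $N_{n,k}(\mathbf x)$ are nonrandom; under adaptive selection it holds only in expectation. Your argument uses almost sure path continuity more essentially. In exchange, it handles the data-dependent choice of $\mathbf x_i$ and $S_i$ cleanly through predictability, which is exactly the algorithm's setting.

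Both arguments need the same extra regularity of $\mathcal X$ (no isolated points, e.g.\ $\mathcal X$ equal to the closure of its interior) to get $N_{n,k}(\mathbf x)\to\infty$ from $h_{n,k}\to 0$. The paper alludes to this as ``local covering regularity,'' and you are right that nonempty interior alone does not suffice.
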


\begin{proof}
Let $\mathcal F_n := \sigma(\mathcal D_n)$ denote the filtration generated by
the partial-observation data. For each task $k\in[K]$, define the task-specific
data
\[
\mathcal D_{n,k}
:=
\{(\mathbf x_i,y_{i,k}) : i\le n,\ k\in S_i\}.
\]
Thus $\mathcal D_{n,k}$ contains only the direct observations of objective $k$,
whereas $\mathcal D_n$ contains all observations from the evaluated subsets
$S_i$ and no observations from the omitted subsets
$\bar S_i=[K]\setminus S_i$.

We first show that the posterior variance of each task vanishes at every fixed
input. Fix $k\in[K]$ and $\mathbf x\in\mathcal X$. Consider the scalar GP
regression problem for $Y_k$ using only $\mathcal D_{n,k}$, and write
\[
\sigma_{n,k}^2(\mathbf x)
:=
\operatorname{Var}\!\left(Y_k(\mathbf x)\mid \mathcal D_{n,k}\right).
\]
Let $\mu_k$ denote the prior mean of $Y_k$, and define the centered process
$\widetilde Y_k(\mathbf x):=Y_k(\mathbf x)-\mu_k(\mathbf x)$. Since the covariance
kernel is continuous and the process admits an almost surely continuous version,
$Y_k$ is mean-square continuous. Hence
\[
\omega_k(r)
:=
\sup_{\substack{\mathbf u,\mathbf v\in\mathcal X\\
\|\mathbf u-\mathbf v\|_2\le r}}
\left(
\mathbb E\left[
\big(\widetilde Y_k(\mathbf u)-\widetilde Y_k(\mathbf v)\big)^2
\right]
\right)^{1/2}
\longrightarrow 0
\qquad
\text{as } r\downarrow 0 .
\]

By the no-starvation condition, the task-$k$ design set
\[
\mathcal X_{n,k}:=\{\mathbf x_i:i\le n,\ k\in S_i\}
\]
has fill distance $h_{n,k}\to 0$. Therefore, for every fixed
$\mathbf x\in\mathcal X$, there exists a sequence $r_n\downarrow 0$ such that
the number of task-$k$ observations in the local ball
\[
I_{n,k}(\mathbf x)
:=
\{i\le n:k\in S_i,\ \|\mathbf x_i-\mathbf x\|_2\le r_n\}
\]
satisfies
\[
N_{n,k}(\mathbf x):=|I_{n,k}(\mathbf x)|\longrightarrow\infty .
\]
For example, on a compact box or any compact domain with the usual local
covering regularity, this follows from the fill-distance condition by taking
$r_n/h_{n,k}\to\infty$ and $r_n\to 0$.

Define the local linear estimator
\[
\widehat Y_{n,k}(\mathbf x)
:=
\mu_k(\mathbf x)
+
\frac{1}{N_{n,k}(\mathbf x)}
\sum_{i\in I_{n,k}(\mathbf x)}
\big(y_{i,k}-\mu_k(\mathbf x_i)\big).
\]
Since
\[
y_{i,k}
=
Y_k(\mathbf x_i)+\varepsilon_{i,k},
\qquad
\varepsilon_{i,k}\sim \mathcal N(0,\sigma_k^2),
\]
with independent noise, we have
\[
Y_k(\mathbf x)-\widehat Y_{n,k}(\mathbf x)
=
\frac{1}{N_{n,k}(\mathbf x)}
\sum_{i\in I_{n,k}(\mathbf x)}
\big(
\widetilde Y_k(\mathbf x)-\widetilde Y_k(\mathbf x_i)
\big)
-
\frac{1}{N_{n,k}(\mathbf x)}
\sum_{i\in I_{n,k}(\mathbf x)}
\varepsilon_{i,k}.
\]
Using Cauchy's inequality in $L^2$ and the independence of the observation
noise,
\[
\mathbb E\!\left[
\big(Y_k(\mathbf x)-\widehat Y_{n,k}(\mathbf x)\big)^2
\right]
\le
\omega_k(r_n)^2
+
\frac{\sigma_k^2}{N_{n,k}(\mathbf x)}
\longrightarrow 0 .
\]
The scalar GP posterior variance is the minimum mean-squared error among
estimators measurable with respect to $\mathcal D_{n,k}$. Therefore,
\[
\sigma_{n,k}^2(\mathbf x)
=
\operatorname{Var}\!\left(Y_k(\mathbf x)\mid \mathcal D_{n,k}\right)
\le
\mathbb E\!\left[
\big(Y_k(\mathbf x)-\widehat Y_{n,k}(\mathbf x)\big)^2
\right]
\longrightarrow 0 .
\]

Now compare the scalar posterior to the full multitask posterior. Since
$\mathcal D_{n,k}\subseteq\mathcal D_n$, conditioning on the full multitask data
cannot increase conditional variance. Hence
\[
\big[\mathbf K_n(\mathbf x,\mathbf x)\big]_{kk}
=
\operatorname{Var}\!\left(Y_k(\mathbf x)\mid \mathcal D_n\right)
\le
\operatorname{Var}\!\left(Y_k(\mathbf x)\mid \mathcal D_{n,k}\right)
=
\sigma_{n,k}^2(\mathbf x)
\longrightarrow 0 .
\]
This holds for every $k\in[K]$. Since
$\mathbf K_n(\mathbf x,\mathbf x)$ is positive semidefinite,
\[
\left|
\big[\mathbf K_n(\mathbf x,\mathbf x)\big]_{k\ell}
\right|
\le
\sqrt{
\big[\mathbf K_n(\mathbf x,\mathbf x)\big]_{kk}
\big[\mathbf K_n(\mathbf x,\mathbf x)\big]_{\ell\ell}
},
\]
so every off-diagonal entry also converges to zero. Therefore,
\[
\mathbf K_n(\mathbf x,\mathbf x)\longrightarrow \mathbf 0_{K\times K}
\qquad
\text{for every fixed } \mathbf x\in\mathcal X .
\]

Next, let
\[
\mathcal X^\star
=
\{\mathbf x^{(1)},\ldots,\mathbf x^{(m)}\}
\subset\mathcal X
\]
be any finite set, and define the stacked latent vector
\[
\mathbf Z
:=
\mathbf Y(\mathcal X^\star)
:=
\big(
\mathbf Y(\mathbf x^{(1)})^\top,\ldots,
\mathbf Y(\mathbf x^{(m)})^\top
\big)^\top
\in\mathbb R^{mK}.
\]
Under the multitask GP posterior,
\[
\mathbf Z\mid\mathcal D_n
\sim
\mathcal N(\boldsymbol\mu_n^\star,\mathbf K_n^\star),
\]
where
\[
\boldsymbol\mu_n^\star
:=
\mathbb E[\mathbf Z\mid\mathcal F_n],
\qquad
\mathbf K_n^\star
:=
\operatorname{Cov}(\mathbf Z\mid\mathcal F_n).
\]
Applying the previous variance argument to each component
$Y_k(\mathbf x^{(j)})$ gives
\[
\operatorname{tr}(\mathbf K_n^\star)\longrightarrow 0 .
\]
Since $\mathbf K_n^\star$ is positive semidefinite, this also implies
\[
\mathbf K_n^\star\longrightarrow \mathbf 0_{mK\times mK}.
\]

It remains to show that the posterior mean converges to the true latent values.
By correct specification, the true objective vector $\mathbf Y_0(\mathcal X^\star)$
is the realized value of $\mathbf Z$ under the same GP data-generating law.
Moreover,
\[
\boldsymbol\mu_n^\star
=
\mathbb E[\mathbf Z\mid\mathcal F_n]
\]
is an $L^2$ martingale with respect to $(\mathcal F_n)_{n\ge 1}$. Since
\[
\mathbb E\!\left[
\|\mathbf Z-\boldsymbol\mu_n^\star\|_2^2
\mid \mathcal F_n
\right]
=
\operatorname{tr}(\mathbf K_n^\star)
\longrightarrow 0,
\]
and the conditional variances are bounded above by the prior second moment,
dominated convergence gives
\[
\mathbb E\!\left[
\|\mathbf Z-\boldsymbol\mu_n^\star\|_2^2
\right]
\longrightarrow 0 .
\]
The martingale convergence theorem implies that
$\boldsymbol\mu_n^\star$ converges almost surely and in $L^2$ to
$\mathbb E[\mathbf Z\mid\mathcal F_\infty]$, where
$\mathcal F_\infty:=\sigma(\cup_{n\ge 1}\mathcal F_n)$. Since the $L^2$ limit is
unique and $\boldsymbol\mu_n^\star\to \mathbf Z$ in $L^2$, we obtain
\[
\boldsymbol\mu_n^\star
\longrightarrow
\mathbf Z
=
\mathbf Y_0(\mathcal X^\star)
\qquad
\text{almost surely}.
\]

Finally, for any $\epsilon>0$,
\[
\begin{aligned}
&\Pi\!\left(
\left\|
\mathbf Y(\mathcal X^\star)
-
\mathbf Y_0(\mathcal X^\star)
\right\|_2
>
\epsilon
\;\middle|\;
\mathcal D_n
\right)
\\
&\qquad\le
\Pi\!\left(
\left\|
\mathbf Y(\mathcal X^\star)-\boldsymbol\mu_n^\star
\right\|_2
>
\epsilon/2
\;\middle|\;
\mathcal D_n
\right)
+
\mathbf 1\!\left\{
\left\|
\boldsymbol\mu_n^\star
-
\mathbf Y_0(\mathcal X^\star)
\right\|_2
>
\epsilon/2
\right\}.
\end{aligned}
\]
By Chebyshev's inequality,
\[
\Pi\!\left(
\left\|
\mathbf Y(\mathcal X^\star)-\boldsymbol\mu_n^\star
\right\|_2
>
\epsilon/2
\;\middle|\;
\mathcal D_n
\right)
\le
\frac{4\,\operatorname{tr}(\mathbf K_n^\star)}{\epsilon^2}
\longrightarrow 0,
\]
and the indicator term converges to zero almost surely because
$\boldsymbol\mu_n^\star\to \mathbf Y_0(\mathcal X^\star)$ almost surely.
Therefore,
\[
\Pi\!\left(
\left\|
\mathbf Y(\mathcal X^\star)
-
\mathbf Y_0(\mathcal X^\star)
\right\|_2
>
\epsilon
\;\middle|\;
\mathcal D_n
\right)
\longrightarrow 0
\qquad
\text{almost surely}.
\]
Since $\mathcal X^\star$ was arbitrary, the same argument applied to a singleton
$\mathcal X^\star=\{\mathbf x\}$ gives, for every fixed
$\mathbf x\in\mathcal X$,
\[
\boldsymbol\mu_n(\mathbf x)\longrightarrow \mathbf Y_0(\mathbf x),
\qquad
\mathbf K_n(\mathbf x,\mathbf x)\longrightarrow \mathbf 0_{K\times K}.
\]
This proves posterior consistency under decoupled oracle evaluations.
\end{proof}

\begin{theorem}[Finite-sample information transfer under decoupling]
\label{thm:finite_sample_information_transfer}
Fix a round \(n\), a candidate design \(\mathbf{x}\in \mathcal X\), and suppose that the
\(K\)-oracle multitask posterior at \(\mathbf{x}\) satisfies
\[
Y_{1:K}(\mathbf{x}) \mid D_n
\sim
\mathcal N\!\left(\mu_n(\mathbf{x}),\Sigma_n(\mathbf{x})\right),
\qquad
\Sigma_n(\mathbf{x}) \succ 0 .
\]
Let \(\mathcal K \triangleq \{1,\ldots,K\}\). For any nonempty proper subset
\(S\subsetneq \mathcal K\), write
\[
\bar S \triangleq \mathcal K \setminus S .
\]
Suppose that a decoupled evaluation observes the selected oracle values through
\[
Z_S(\mathbf{x})
=
Y_S(\mathbf{x})+\varepsilon_S,
\qquad
\varepsilon_S\sim \mathcal N(0,\Sigma_{\varepsilon,SS}),
\]
where \(\varepsilon_S\) is independent of \(Y_{1:K}(\mathbf{x})\mid D_n\) and
\(\Sigma_{\varepsilon,SS}\succeq 0\). Define the expected one-step posterior entropy
reduction about the unevaluated oracles \(Y_{\bar S}(\mathbf{x})\) by
\[
\Delta_n(S;\mathbf{x})
\triangleq
H\!\left(Y_{\bar S}(\mathbf{x})\mid D_n\right)
-
\mathbb E_{Z_S(\mathbf{x})\mid D_n}
\left[
H\!\left(Y_{\bar S}(\mathbf{x})\mid D_n,Z_S(\mathbf{x})\right)
\right].
\]
Then
\[
\Delta_n(S;\mathbf{x})
=
I\!\left(Z_S(\mathbf{x});Y_{\bar S}(\mathbf{x})\mid D_n\right),
\]
and this quantity has the closed form
\[
\Delta_n(S;\mathbf{x})
=
\frac{1}{2}
\log
\frac{
\left|\Sigma_{n,\bar S\bar S}(\mathbf{x})\right|
}{
\left|
\Sigma_{n,\bar S\bar S}(\mathbf{x})
-
\Sigma_{n,\bar S S}(\mathbf{x})
\left(\Sigma_{n,SS}(\mathbf{x})+\Sigma_{\varepsilon,SS}\right)^{-1}
\Sigma_{n,S\bar S}(\mathbf{x})
\right|
}.
\]
Equivalently,
\[
\Delta_n(S;\mathbf{x})
=
\frac{1}{2}
\log
\frac{
\left|\Sigma_{n,SS}(\mathbf{x})+\Sigma_{\varepsilon,SS}\right|
\left|\Sigma_{n,\bar S\bar S}(\mathbf{x})\right|
}{
\left|
\begin{bmatrix}
\Sigma_{n,SS}(\mathbf{x})+\Sigma_{\varepsilon,SS}
&
\Sigma_{n,S\bar S}(\mathbf{x})
\\
\Sigma_{n,\bar S S}(\mathbf{x})
&
\Sigma_{n,\bar S\bar S}(\mathbf{x})
\end{bmatrix}
\right|
}.
\]
Consequently, for any fixed decoupled oracle budget \(M\in\{1,\ldots,K-1\}\), the subset
\[
S_n^\star(\mathbf{x})
\in
\arg\max_{S\subseteq \mathcal K:\ |S|=M}
I\!\left(Z_S(\mathbf{x});Y_{\bar S}(\mathbf{x})\mid D_n\right)
\]
maximizes the expected one-step posterior entropy reduction about the unevaluated oracles.

In the noiseless latent-oracle case \(\Sigma_{\varepsilon,SS}=0\), this reduces to
\[
\Delta_n(S;\mathbf{x})
=
I\!\left(Y_S(\mathbf{x});Y_{\bar S}(\mathbf{x})\mid D_n\right)
=
\frac{1}{2}
\log
\frac{
\left|\Sigma_{n,SS}(\mathbf{x})\right|
\left|\Sigma_{n,\bar S\bar S}(\mathbf{x})\right|
}{
\left|\Sigma_n(\mathbf{x})\right|
},
\]
which is exactly the Gaussian mutual-information criterion used for subset selection.

Moreover, let
\[
\mathrm{TC}_n(\mathbf{x})
\triangleq
\sum_{k=1}^K
H\!\left(Y_k(\mathbf{x})\mid D_n\right)
-
H\!\left(Y_{1:K}(\mathbf{x})\mid D_n\right),
\]
and define \(\mathrm{TC}_{n,S}(\mathbf{x})\) and \(\mathrm{TC}_{n,\bar S}(\mathbf{x})\) analogously for the
subvectors \(Y_S(\mathbf{x})\) and \(Y_{\bar S}(\mathbf{x})\). Then, for every
\(S\subsetneq \mathcal K\),
\[
0
\le
\Delta_n(S;\mathbf{x})
\le
I\!\left(Y_S(\mathbf{x});Y_{\bar S}(\mathbf{x})\mid D_n\right)
=
\mathrm{TC}_n(\mathbf{x})
-
\mathrm{TC}_{n,S}(\mathbf{x})
-
\mathrm{TC}_{n,\bar S}(\mathbf{x})
\le
\mathrm{TC}_n(\mathbf{x}).
\]
Thus, if \(\mathrm{TC}_n(\mathbf{x})<\tau\), then no decoupled \(M\)-oracle evaluation at
\(\mathbf{x}\) can reduce the posterior entropy of the unevaluated oracles by more than
\(\tau\) nats. Finally,
\[
\Delta_n(S;\mathbf{x})=0
\quad\Longleftrightarrow\quad
\Sigma_{n,S\bar S}(\mathbf{x})=0,
\]
so a decoupled evaluation transfers strictly positive information to the unevaluated oracles
exactly when the selected and unevaluated oracle blocks have nonzero posterior
cross-covariance at \(\mathbf{x}\).
\end{theorem}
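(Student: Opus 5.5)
The plan is to work entirely with the finite-dimensional Gaussian vector $(Y_S(\mathbf{x}),Y_{\bar S}(\mathbf{x}),Z_S(\mathbf{x}))$ conditioned on $D_n$. The intermediate claims are assembled in order.

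\textbf{Step 1: $\Delta_n(S;\mathbf{x})$ as a mutual information.} By definition, $H(Y_{\bar S}\mid D_n)-\mathbb E_{Z_S}[H(Y_{\bar S}\mid D_n,Z_S)]$ equals $H(Y_{\bar S}\mid D_n)-H(Y_{\bar S}\mid Z_S,D_n)$, which is $I(Z_S;Y_{\bar S}\mid D_n)$.

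\textbf{Step 2: closed forms.} The joint law of $(Z_S,Y_{\bar S})$ given $D_n$ is Gaussian with covariance
\[
\begin{bmatrix}\Sigma_{n,SS}+\Sigma_{\varepsilon,SS} & \Sigma_{n,S\bar S}\\ \Sigma_{n,\bar S S} & \Sigma_{n,\bar S\bar S}\end{bmatrix},
\]
using the independence of $\varepsilon_S$. The conditional covariance of $Y_{\bar S}$ given $Z_S$ is the Schur complement, and it does not depend on the realized value of $Z_S$. The Gaussian entropy formula then gives the first expression. The second expression follows from the Schur determinant identity $|A||D-CA^{-1}B|=\left|\begin{smallmatrix}A&B\\C&D\end{smallmatrix}\right|$. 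Invertibility of $A=\Sigma_{n,SS}+\Sigma_{\varepsilon,SS}$ comes from $\Sigma_{n,SS}\succ 0$, since it is a principal submatrix of $\Sigma_n\succ0$.

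\textbf{Step 3: optimality and the noiseless case.} Optimality over $|S|=M$ is then tautological, because the objective is literally $\Delta_n$. Setting $\Sigma_{\varepsilon,SS}=0$ recovers \eqref{eq:mi_det_ratio}.

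\textbf{Step 4: the inequality chain.} $\Delta_n\ge0$ holds because mutual information is nonnegative. Equivalently, $\Sigma_{\bar S\bar S}\succeq$ the Schur complement, so the determinant ratio is at least $1$.

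For $\Delta_n\le I(Y_S;Y_{\bar S})$, use the Markov chain $Y_{\bar S}\to Y_S\to Z_S$, which holds because the noise is independent. The data processing inequality then applies. A matrix alternative also works: $\Sigma_{\bar S S}(\Sigma_{SS}+\Sigma_\varepsilon)^{-1}\Sigma_{S\bar S}\preceq\Sigma_{\bar SS}\Sigma_{SS}^{-1}\Sigma_{S\bar S}$, and $\log\det$ is monotone.

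The identity $I(Y_S;Y_{\bar S})=\mathrm{TC}_n-\mathrm{TC}_{n,S}-\mathrm{TC}_{n,\bar S}$ comes from expanding each total correlation as a sum of marginal entropies minus a joint entropy. The marginal terms cancel, since $\sum_{k\in S}+\sum_{k\in\bar S}=\sum_{k}$. What remains is $H(Y_S)+H(Y_{\bar S})-H(Y)$.

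The final bound $\le \mathrm{TC}_n$ uses $\mathrm{TC}_{n,S},\mathrm{TC}_{n,\bar S}\ge0$, which holds because each is a KL divergence. The $\tau$ statement is then immediate.

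\textbf{Step 5: the zero-information characterization.} This is the step needing the most care. Write $A=\Sigma_{n,SS}+\Sigma_{\varepsilon,SS}\succ0$ and $B=\Sigma_{n,S\bar S}$. Then $\Delta_n=0$ holds iff $|\Sigma_{\bar S\bar S}|=|\Sigma_{\bar S\bar S}-B^\top A^{-1}B|$. Since $P:=B^\top A^{-1}B\succeq0$ and $\Sigma_{\bar S\bar S}-P\succ0$, equality of determinants is equivalent to
\[
|I-\Sigma_{\bar S\bar S}^{-1/2}P\,\Sigma_{\bar S\bar S}^{-1/2}|=1.
\]
The matrix $\Sigma_{\bar S\bar S}^{-1/2}P\,\Sigma_{\bar S\bar S}^{-1/2}$ has eigenvalues in $[0,1)$, so this determinant equals $1$ iff all those eigenvalues are $0$. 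That holds iff $P=0$, and since $A^{-1}\succ0$, this is iff $B=0$. The converse direction is trivial.
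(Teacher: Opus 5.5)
Your proposal is correct and follows essentially the same route as the paper: the Schur-complement conditional covariance, which does not depend on the realized $Z_S$, combined with the Gaussian entropy formula; data processing along $Y_{\bar S}\to Y_S\to Z_S$; cancellation of the marginal entropies for the total-correlation identity; and the Gaussian equivalence between zero mutual information and zero cross-covariance. Your additions (the Schur determinant identity for the second closed form, invertibility of $\Sigma_{n,SS}+\Sigma_{\varepsilon,SS}$, and the explicit eigenvalue argument in Step 5, where the paper simply cites the standard Gaussian fact) are all sound, and the one unstated claim $\Sigma_{n,\bar S\bar S}-P\succ 0$ holds because the joint covariance of $(Z_S,Y_{\bar S})$ is a reordering of $\Sigma_n\succ 0$ plus a positive semidefinite block.
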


\begin{proof}
Condition throughout on \(\mcl D_n\). Since
\[
\begin{bmatrix}
Z_S(\mathbf{x})\\
Y_{\bar S}(\mathbf{x})
\end{bmatrix}
\]
is jointly Gaussian, the conditional covariance of
\(Y_{\bar S}(\mathbf{x})\) after observing \(Z_S(\mathbf{x})\) is
\[
\Sigma_{n,\bar S\bar S}(\mathbf{x})
-
\Sigma_{n,\bar S S}(\mathbf{x})
\left(\Sigma_{n,SS}(\mathbf{x})+\Sigma_{\varepsilon,SS}\right)^{-1}
\Sigma_{n,S\bar S}(\mathbf{x}).
\]
This covariance does not depend on the realized value of \(Z_S(\mathbf{x})\). Applying the
Gaussian entropy formula
\[
H(W)=\frac{1}{2}\log\!\left((2\pi e)^m |\Sigma_W|\right)
\]
therefore gives the stated expression for \(\Delta_n(S;\mathbf{x})\). Since expected entropy
reduction is mutual information,
\[
\Delta_n(S;\mathbf{x})
=
I\!\left(Z_S(\mathbf{x});Y_{\bar S}(\mathbf{x})\mid \mcl D_n\right).
\]
Maximizing this quantity over all subsets \(S\subseteq\mathcal K\) with \(|S|=M\) therefore
maximizes the expected one-step posterior entropy reduction about the unevaluated oracles.

For the total-correlation bound, the observation model gives the Markov relation
\[
Y_{\bar S}(\mathbf{x}) \longrightarrow Y_S(\mathbf{x}) \longrightarrow Z_S(\mathbf{x})
\]
conditional on \(\mcl D_n\). Hence, by the data-processing inequality,
\[
I\!\left(Z_S(\mathbf{x});Y_{\bar S}(\mathbf{x})\mid \mcl D_n\right)
\le
I\!\left(Y_S(\mathbf{x});Y_{\bar S}(\mathbf{x})\mid \mcl D_n\right).
\]
Using
\[
\mathrm{TC}_n(\mathbf{x})
=
\sum_{k=1}^K H\!\left(Y_k(\mathbf{x})\mid \mcl D_n\right)
-
H\!\left(Y_{1:K}(\mathbf{x})\mid \mcl D_n\right),
\]
and the analogous definitions for \(\mathrm{TC}_{n,S}(\mathbf{x})\) and
\(\mathrm{TC}_{n,\bar S}(\mathbf{x})\), direct cancellation yields
\[
I\!\left(Y_S(\mathbf{x});Y_{\bar S}(\mathbf{x})\mid \mcl D_n\right)
=
\mathrm{TC}_n(\mathbf{x})
-
\mathrm{TC}_{n,S}(\mathbf{x})
-
\mathrm{TC}_{n,\bar S}(\mathbf{x}).
\]
Nonnegativity of total correlation gives the final upper bound by \(\mathrm{TC}_n(\mathbf{x})\).
Finally, for jointly Gaussian random vectors, zero mutual information is equivalent to zero
cross-covariance. Since
\[
\operatorname{Cov}\!\left(Z_S(\mathbf{x}),Y_{\bar S}(\mathbf{x})\mid D_n\right)
=
\Sigma_{n,S\bar S}(\mathbf{x}),
\]
we have \(\Delta_n(S;\mathbf{x})=0\) if and only if
\(\Sigma_{n,S\bar S}(\mathbf{x})=0\).
\end{proof}

\end{document}